\pgfplotsset{compat=1.13}
\newtheorem{theorem}{Theorem}%[section] %restart counter at new section
\newtheorem{definition}[theorem]{Definition}
\newtheorem{proposition}[theorem]{Proposition}
\newcommand{\StatexIndent}[1][3]{%
  \setlength\@tempdima{\algorithmicindent}%
  \Statex\hskip\dimexpr#1\@tempdima\relax}
\DeclareMathOperator*{\argmin}{arg\,min}
\DeclareMathOperator*{\argmax}{arg\,max}
\pgfplotsset{
	colormap={blueyellow}{rgb255(0cm)=(0,0,255); rgb255(1cm)=(255,255,0)}
}
\definecolor{C1}{RGB}{215,25,28}
\definecolor{C2}{RGB}{171,221,164}
\definecolor{C3}{RGB}{43,131,186}
\definecolor{C4}{RGB}{253,174,97}
\begin{document}
\bstctlcite{IEEEexample:BSTcontrol}

\title{Multi-robot persistent surveillance with connectivity constraints}

%\author{Michael~Shell,~\IEEEmembership{Member,~IEEE,}
%        John~Doe,~\IEEEmembership{Fellow,~OSA,}
%        and~Jane~Doe,~\IEEEmembership{Life~Fellow,~IEEE}% <-this % stops a space

%\thanks{M. Shell was with the Department
%of Electrical and Computer Engineering, Georgia Institute of Technology, Atlanta,
%GA, 30332 USA e-mail: (see http://www.michaelshell.org/contact.html).}% <-this % stops a space
%\thanks{J. Doe and J. Doe are with Anonymous University.}% <-this % stops a space
%\thanks{Manuscript received April 19, 2005; revised August 26, 2015.}}

\author{J\"urgen~Scherer,
        Bernhard~Rinner$^{1}$%
\thanks{$^{1}$Both authors are with the Institute of Networked and Embedded Systems, University of Klagenfurt, Austria,
{\tt\footnotesize \{juergen.scherer, bernhard.rinner\}@aau.at} }
}

% The paper headers
%\markboth{Journal of \LaTeX\ Class Files,~Vol.~14, No.~8, August~2015}%
%{Shell \MakeLowercase{\textit{et al.}}: Bare Demo of IEEEtran.cls for IEEE Journals}

% make the title area
\maketitle

\begin{abstract}
Mobile robots, especially unmanned aerial vehicles (UAVs), are of increasing interest for surveillance and disaster response scenarios. We consider the problem of multi-robot persistent surveillance with connectivity constraints where robots have to visit sensing locations periodically and maintain a multi-hop connection to a base station. We formally define several problem instances closely related to multi-robot persistent surveillance with connectivity constraints, i.e.,  connectivity-constrained multi-robot persistent surveillance (CMPS), connectivity-constrained multi-robot reachability (CMR), and connectivity-constrained multi-robot reachability with relay dropping (CMRD), and show that they are all NP-hard on general graph. We introduce three heuristics with different planning horizons for convex grid graphs and combine these with a tree traversal approach which can be applied to a partitioning of non-convex grid graphs (CMPS with tree traversal, CMPSTT). In simulation studies we show that a short horizon greedy approach, which requires parameters to be optimized beforehand, can outperform a full horizon approach,  which requires a tour through all sensing locations, if the number of robots is larger than the minimum number of robots required to reach all sensing locations. The minimum number required is the number of robots necessary for building a chain to the farthest sensing location from the base station. Furthermore, we show that partitioning the area and applying the tree traversal approach can achieve a performance similar to the unpartitioned case up to a certain number of robots but requires less optimization time.
\end{abstract}

% Note that keywords are not normally used for peerreview papers.
\begin{IEEEkeywords}
%IEEE, IEEEtran, journal, \LaTeX, paper, template.
Multi-Robot Systems; Cooperating Robots; Patrolling; Path Planning for Multiple Mobile Robots or Agents;
\end{IEEEkeywords}

% For peerreview papers, this IEEEtran command inserts a page break and
% creates the second title. It will be ignored for other modes.
\IEEEpeerreviewmaketitle

\section{Introduction}
%\IEEEPARstart{T}{his} demo file is intended to serve as a ``starter file'' for IEEE journal papers produced under \LaTeX\ using IEEEtran.cls version 1.8b and later. I wish you the best of success.
\IEEEPARstart{M}{obile} robots, especially unmanned aerial vehicles (UAVs), are of increasing interest in various application domains including surveillance. Examples for the deployment of robots for surveillance scenarios are disaster response~\cite{Erdelj2017}, \cite{Scherer2015}, \cite{Khan2018}, wildfire monitoring~\cite{Ghamry2016}, security tasks~\cite{Liu2013}, environmental monitoring~\cite{Rossi2016}, and exploration and mapping \cite{Masehian2017}, \cite{Santos2017}. Persistent surveillance is the task of continuously monitoring an environment over a longer period of time. The potentially large area and the limited sensor view of robots requires a movement strategy such that every point of interest in the area gets visited periodically by the robots. In disaster response scenarios, it is also crucial that the mission operators are aware of the situation at any time during the mission, which implies that the robots have to continuously report the state of the mission and the captured data to the base station. Wireless transceivers enable the robots to exchange data over a limited distance only, and therefore it is necessary to transmit the data over multiple hops if a continuous connectivity to the base station is required and the area is larger than the communication range. A continuously connected network of robots and the base station further allows to track the state of the (aerial) robots for safety reasons.

In this paper we investigate in the problem of persistent surveillance with continuous connectivity constraints. Given a representation of the area, the number of robots, the positions of the points of interest (which we denote as sensing locations) and the base station, the problem is to find a path for each robot that minimizes the worst idleness for all sensing locations such that the network of robots and base station is connected throughout the mission. The idleness of a sensing location at a certain instant is defined as time that passed since the last visit by any robot, and the worst idleness is the maximum idleness over all sensing locations and over the whole mission duration. Since the duration of a persistence surveillance mission is potentially infinite, strategies are necessary that generate solutions for an infinite time horizon. Due to the connectivity constraint, the robots mutually restrict their possible movements and traditional patrolling strategies (e.g. \cite{Portugal2017}, \cite{Pasqualetti2012b}, \cite{Lauri2014}) cannot be directly applied because they do not coordinate the robots' movement in space and time for continuous connectivity. Online multi-robot persistent surveillance algorithms often need the state of the whole environment to make a decision about the next action. Therefore, they rely implicitly on some communication mechanism allowing the robots to exchange information, but do not consider a limited communication range \cite{Nigam2012}, \cite{Franco2013}.

Path planning for mobile robots is often based on an abstract representation of the environment obtained by some discretization technique. In this work we consider graphs \cite{Banfi2018} and grids \cite{Nigam2012}. In a graph representation two types of edges describe whether a robot can move between two vertices and whether robots can communicate when placed at two different vertices at the same time. A grid represents a special type of graph where a robot can move between the neighboring cells. We study different problem instances related to connectivity-constrained multi-robot persistent surveillance. In particular, we define the problems connectivity-constrained multi-robot persistent surveillance (CMPS), connectivity-constrained multi-robot reachability (CMR), and connectivity-constrained multi-robot reachability with relay dropping (CMRD). The latter two problems are concerned with reachability of vertices when all robots start at a dedicated base station vertex. We show that all these problems are NP-hard on graphs.

Figure~\ref{fig:scenario_cmpstt} illustrates the different problems in a grid-based scenario composed by four convex partition around an obstacle. Sensing robots aim to cover the partitions while maintaining connectivity to the base station with the help of relay and release robots. The relay and release robots do not move while a partition is covered. Release robots are robots that stay at a release point. The release point is the starting position where the sensing robots start to cover the partition after they have moved together with the release robot to the release point. CMPS considers the movement planning within one convex area, CMR deals with the question which vertices/cells are reachable from the base station, and CMRD deals with the problem of placing relay robots. CMPSTT is concerned about the order of covering the partitions and the number of robots assigned to partitions. We assume that robots can change roles and can take on every role.

Because of the complexity of finding feasible solutions, we do not attempt to solve CMPS on general graphs. In previous works \cite{Scherer2016}, \cite{Scherer2017} we investigated in different strategies on grids where cells within a certain distance are within communication range. In contrast to our previous work we do not consider energy constraints in this work. These strategies can only be applied on convex grids without holes, which arise in the presence of obstacles. A tree traversal algorithm \cite{Mosteo2009} with relay dropping can be applied to more general environments but does not perform as well as the suggested strategies in convex scenarios with many sensing locations. Therefore, we suggest a combination of tree traversal and coverage (CMPS with tree traversal, CMPSTT), which can be applied after a partitioning of an arbitrary shaped environment into convex partitions and ensures the connectivity constraint. We adapt the algorithm of \cite{Mosteo2009} and show that determining the optimal order for visiting the partitions of a given partitioning to minimize the worst idleness is also NP-hard. Note that we have studied related patrolling problems of minimizing or constraining the delay between data generation at the robots and data arrival at the base station with \textit{relaxed} connectivity constraints in \cite{Scherer2019_MDTD} and \cite{Scherer2019_MILC}.

\begin{figure}[t]
	\centering
	\includegraphics[scale=0.4]{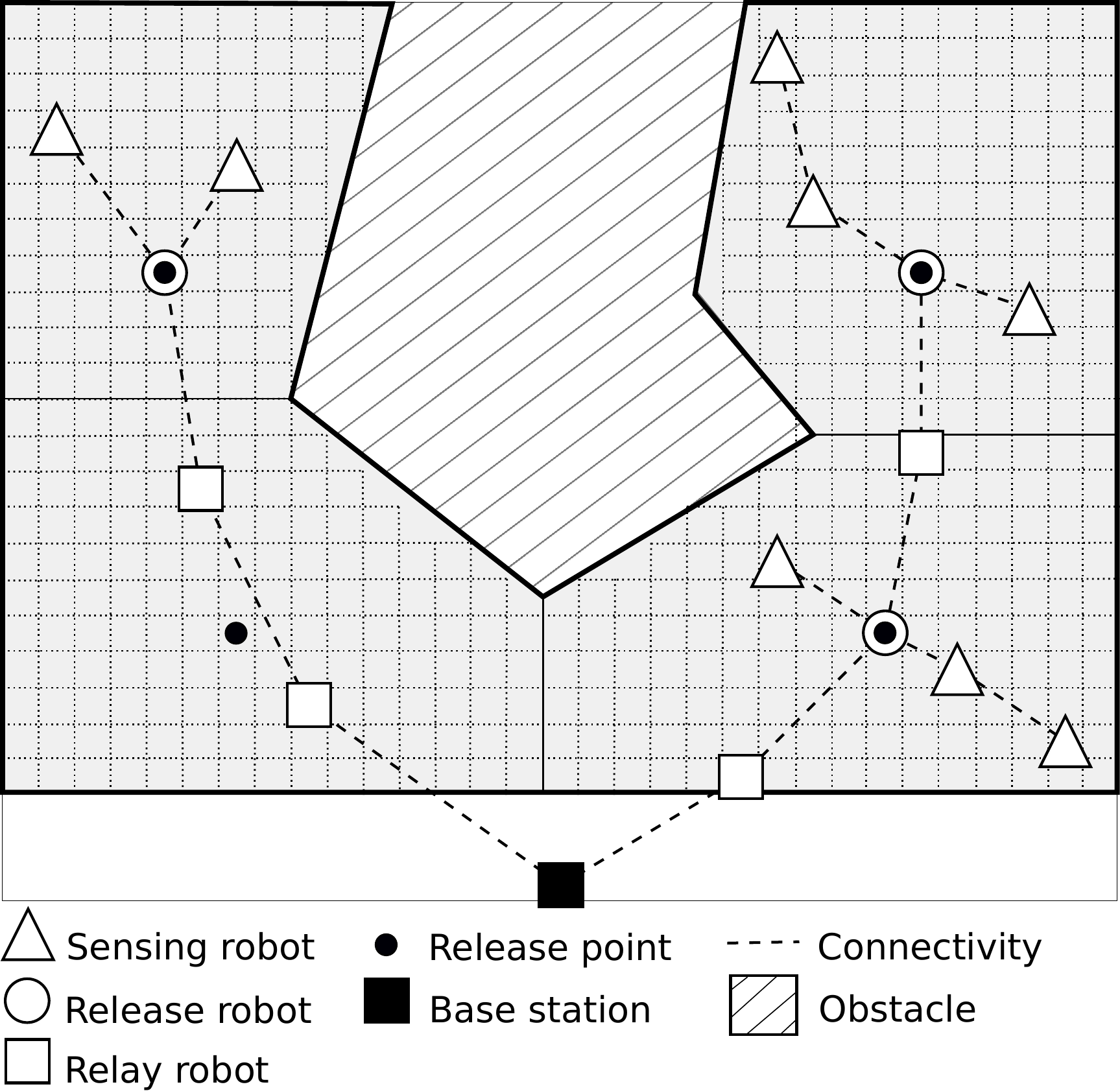}
	\caption{Problem instances of multi-robot persistent surveillance with connectivity constraints depicted on a grid environment with four convex partitions around an obstacle. The lower left partition is currently not covered in the depicted scenario. In a potential solution for CMPSTT the robots in the upper left partition gather at the release robot and retreat together with the relay robot to the release point of the lower left partition to cover this partition.}
	\label{fig:scenario_cmpstt}
\end{figure}

The contributions of this work can be summarized as follows: (i) We define CMPS, CMR, CMRD on general graphs and prove that they are NP-hard, (ii) we propose strategies for convex grid areas, (iii) we combine tree traversal and area coverage for partitioned environments and show that CMPSTT is NP-hard, and (iv) we perform an extensive simulation study to assess the performance of the proposed strategies.

The remainder of the article is organized as follows: Section~\ref{sec:relatedwork} discusses related work, Section~\ref{sec:problem} introduces the notation and investigates in the complexity on general graphs, Section~\ref{sec:convex} describes the strategies on convex grid areas, Section~\ref{sec:partition} describes an extension to partitioned areas, Section~\ref{sec:eval} presents the results of the simulation studies, and Section~\ref{sec:conclusion} concludes the paper.

\section{Related work}
\label{sec:relatedwork}

Multi-robot persistent surveillance is related to the multi-robot patrolling problem. This problem is usually concerned with determining closed paths, which are continuously traversed by robots, or with controlling the robots along predefined paths to optimize for some performance metric \cite {Acevedo2013_ICUAS}, \cite{Smith2012}, \cite{Pasqualetti2012a}, \cite{Pasqualetti2012b}. Tour planning is often tackled from the operations research perspective by solving a vehicle routing problem or a multi-traveling salesperson problem on a graph. In \cite{Mitchell2015} tours for multiple agents are planned such that the visit frequency of targets is maximized and agents can refuel at refuel depots. Mersheeva and Friedrich \cite{Mersheeva2015} perform path planning for multiple UAVs that repeatedly visit targets with different priorities. UAVs change batteries at base stations, and the planning horizon is determined by the available number of batteries. Manyam et al. \cite{Manyam2017} consider also the data delivery latency to the base station. Keller et al. \cite{Keller2017} present a tour planning approach taking into account turning radius constraints of gliders. In \cite{Lan2016} trajectories are planned to estimate a spatio-temporal field in a dynamic environment. A key difference to our work is that  continuous connectivity of the robots to a base station has not been considered at all.

Nigam et al. \cite{Nigam2012} introduce a control strategy where each UAV selects the next cell to visit based on the idleness of the cell, the distance between the UAV and the cell, and the distances between the other UAVs and the cell. The infinite horizon persistence surveillance problem is converted to a short horizon problem where the necessary parameters are determined by an offline optimization approach. Franco et al. \cite{Franco2013} present a controller adjusting speed and direction to reach a desired long term coverage profile. Santos et al. \cite{Santos2017} investigate in exploring a spatio-temporal field with different goal selection strategies for a single robot. If the motion planning is done online with the approaches in \cite{Nigam2012}, \cite{Franco2013}, the robots need to know the state of the whole environment and of all other robots. This knowledge requires communication between the robots or the base station and the robots. However, the discussed related work does not consider limited communication range.
%\cite{Acevedo2014},

Maintaining connectivity while executing spatially distributed tasks is a recurrent requirement in mobile-robot systems. Panterati et al. \cite{Panerati2018} present a swarm of robots that move in compact formation through a convex environment and stops at certain locations to visit multiple task locations simultaneously by building one chain of robots towards each task location. The order in which the tasks should be visited is determined by solving a mixed integer linear program (MILP). A tree growing algorithm \cite{Majcherczyk2018} arranges the robots into a star-like topology to visit multiple task locations. Ponda et al. \cite{Ponda2012} present an algorithm for allocation of data streaming and relaying tasks to maintain connectivity to a base station during task execution. Gr{\o}tli and Johansen \cite{Grotli2012} investigate in offline mission planning for multiple UAVs to visit targets and stream data to a base station by solving a MILP model and exploiting a radio propagation path loss simulator to take bandwidth requirements into account. Flushing et al. \cite{Flushing2017} present also a MILP model for joint task scheduling and data routing and transmission scheduling. Finally, Zavlanos et al. \cite{Zavlanos2013} present a distributed algorithm for maintaining communication requirements with fixed infrastructure. All these approaches focus on connectivity maintenance with or without explicit task allocation but without explicitly considering persistence visits of a given set of tasks. Solving MILPs for path planning problems is demanding due to the computational complexity of integer programming.

The computational complexity of planning the robots' movement on graphs has been investigated in literature. Hollinger and Singh \cite{Hollinger2012} investigate in the multi-robot path planning with connectivity constraints on graphs. They show that the problem of planning paths for multiple robots from start to goal positions while maintaining connectivity in every time step is NP-hard, but some details about how to construct the graph are missing. Tateo et al. \cite{Tateo2018} show that the related problem, which they call multi-agent connected path planning problem (MCPP), is PSPACE-complete. Anisi et al. \cite{Anisi2010} show that the related problem of planning a path in a visibility graph such that several targets can be observed is NP-hard. Banfi et al. \cite{Banfi2018} show that the problem of planning paths to reach a connected configuration on a graph in minimum time is NP-hard. In contrast to all these investigations we focus on persistent surveillance with the connectivity requirement to a base station.

As a summary, our work combines persistent surveillance with connectivity requirements and appropriate motion planning. While there is a lot of work on each of these areas, we are not aware of related work that can be directly applied to our problem. The considered exploration task in \cite{Mosteo2009} represents the most relevant work on connectivity-constrained motion planning with goal selection, and we use it as a baseline for our simulation study. We employ the work in \cite{Nigam2012} on goal selection for persistent surveillance, and a comparison with other approaches for persistent surveillance has been provided in \cite{Nigam2012}. Work that approach connectivity-constrained motion from a control-theoretic perspective (e.g. \cite{Zavlanos2013}, \cite{Zavlanos2008}, \cite{Schuresko2012}) do often not consider a particular application and goal selection. As a consequence, goal selection and scheduling of the robots for persistent surveillance has to be incorporated into these approaches.

\section{Problem formulation and complexity analysis}
\label{sec:problem}

\begin{table}[t]
{\footnotesize
\begin{tabbing}
	\textbf{Symbol} \hspace{6em} \= \textbf{Meaning} \\
	$G_M=(V, E_M)$ \> (undirected) movement graph with \\
	\> vertex set $V$ and edge set $E_M$ \\
	$G_C=(V, E_C)$ \> (undirected) connectivity graph with \\
	\> vertex set $V$ and edge set $E_C$ \\
	$G_C\langle W\rangle$ \> vertex induced subgraph of $G_C$ \\
	\> (defined by vertex set $W$) \\
	$b$ \> base station vertex, or base station partition \\
	$[v,w] \in E$ \> (undirected) edge between $v$ and $w$ \\
	$V_S$ \> set of sensing location vertices \\
	$n$ \> number of vertices in $V$ \\
	$R=\{1, \ldots, r\}$ \> set of $r$ robots \\
	$p_t(i)$ \> position of robot $i$ at time $t$ \\
	$p_t$ \> vector of positions of all robots at time $t$ \\
	$\pi$ \> sequence of all robot positions \\
	$G_C^t$ \> vertex induced subgraph of $G_C$ at time $t$ \\
	\> (defined by set of vertices in $p_t$ and $b$) \\
	$I_t^{\pi}(v)$ \> instantaneous idleness of vertex $v$ at time $t$ \\
	\> (using $\pi$) \\
	$WI_t^{\pi}(v)$ \> instantaneous worst idleness of vertex $v$ at time $t$ \\
	\> (using $\pi$) \\
	$WI$ \> worst idleness \\
	$CT$ \> coverage time \\
	$dist_{G}(s,d)$ \> length of shortest path between vertices \\
	\> $s$ and $d$ in graph $G$ \\
	$\mathbb{R}_{\geq 0}, \mathbb{R}_{> 0}$ \> set of real numbers greater than or equal 0, and \\
	\> greater than $0$, respectively \\
	CMPS \> connectivity-constrained multi-robot persistent \\
	\> surveillance \\
	CMPSTT \> CMPS with tree traversal \\
	CMR \> connectivity-constrained multi-robot reachability \\
	CMRD \> CMR with relay dropping \\
	FH \> full horizon algorithm \\
	SH \> short horizon algorithm \\
	SHC \> short horizon cooperative algorithm \\
	TT \> tree traversal algorithm \\
\end{tabbing}
}
\caption{List of symbols and abbreviations.}
\label{tab:symbols}
\end{table}

This section introduces the notation and provides formal definitions of the problems. The used symbols and abbreviations are summarized in Table~\ref{tab:symbols}. A set $R=\{1, \ldots, r\}$ of robots is available for the surveillance of an environment, which is modeled with two graphs: the movement graph $G_M=(V, E_M)$ and the connectivity graph $G_C=(V, E_C)$. Both graphs share the same set of vertices $V$ (with $|V|=n$), which describe possible positions of robots at discrete points in time. Time is divided into time steps, and the positions of the robots at time step $t$ is denoted $p_t=(p_t(1), \ldots, p_t(r))$, with $p_t(i) \in V$. A subset $V_S \subseteq V$ represents sensing locations. When there is an edge $[v,w] \in E_M$, a robot can move from $v$ to $w$ within one time step. An edge $[v,w] \in E_C$ means that two robots, or a robot and the base station $b \in V$, are able to transfer data to each other when they are at positions $v$ and $w$ at the same time. We call the tuple $(G_M,G_C,b,V_S)$ describing an environment simply ``graph'' and refer explicitly to the graphs $G_M$ and $G_C$ if necessary.

A patrolling solution $\pi$ is a mapping from instants of time to vertices in $V$ for every robot and describes when vertices are visited by the robots. At each time step an \emph{instantaneous idleness} is associated with a vertex $v \in V_S$. This value describes the time passed since the last visit of the vertex. The definition of the idleness criterion adheres to the definition in \cite{Lauri2014}:

\begin{definition}[Instantaneous idleness, instantaneous worst idleness, worst idleness criterion \cite{Lauri2014}]
If the robots follow a solution $\pi$, the instantaneous idleness $I_t^{\pi}(v) \in \mathbb{R}_{\geq 0}$ at time $t$ of vertex $v \in V_S$ is the elapsed duration since the last visit of $v$ by any robot. By convention, at initial time, $I_0^{\pi}(v) = 0$, for any solution $\pi$ and each $v \in V_S$. The worst idleness criterion $WI^{\pi}$ is defined as 
\begin{equation}
	WI^{\pi}:=\limsup_{t \rightarrow +\infty}{WI_t^{\pi}}
\end{equation}
where $WI_t^{\pi}$ is the instantaneous worst idleness and is defined as $WI_t^{\pi} := \max_{v \in V_S}{I_t^{\pi}(v)}$.
\end{definition}

A solution of a persistence surveillance problem can be described as a sequence of positions $\pi:=(p_0, p_1, \ldots)$ with each $p_{t+1}$ resulting from $p_t$ where each robot moves along an edge of $E_M$ (or stays at the same vertex). A valid solution for CMPS has the property that at each time step $t$ the vertex induced subgraph $G_C^t:={G_C\langle\{p_t(i): i \in R\} \cup \{b\}\rangle}$ is connected, i.e. all robots are connected with the base station. It is allowed that multiple robots can move to the same vertex at the same time.

To analyze the complexity of the problem of finding a minimum $WI$ solution, we define CMPS as decision problem (the problem of deciding whether a given tuple belongs to a set of tuples):
\begin{definition}[d-CMPS]
Problem \mbox{d-CMPS} is a set of tuples of the form $(G_M,G_C,b,V_S,p_0,1^T)$ where $p_0 \in V^{r}$ are the initial positions of the robots and $T$ is a time bound. The elements of the set have the properties: (i)~$p_0(i)=v \in V_S$ for some robot $i$, (ii)~there is a sequence $(p_0, \ldots, p_{t'}), t' \leq T$ such that $p_{t'}=p_0$, each $v \in V_S$ is visited by some robot (i.e. there is a robot $i$ with $p_t(i)=v$ for $0 \leq t \leq t'$), and (iii)~the vertex induced subgraph $G_C^t$ is connected for $0 \leq t \leq t'$, and (iv)~${r<n}$.
\end{definition}

In this definition $1^T$ is a string of 1s of length $T$. The reason for this definition is that we can provide a polynomial transformation from 3SAT\footnote{A 3SAT instance consists of a set $W=\{x_1, \ldots, x_\alpha\}$ of Boolean variables, and a set $C=\{c_1, \ldots, c_\beta\}$ of clauses where each clause contains exactly three literals. The literals are of the form $x_i$ or $\overline{x}_i$ where $x_i \in W$. The question is, whether there is an assignment of values from $\{True, False\}$ to the variables such that in every clause at least one literal evaluates to $True$.} to \mbox{d-CMPS} to show that the problem is NP-hard (actually it is NP-complete). Using the definition $(G_M,G_C,b,V_S,p_0,T)$, a transformation would have to check a sequence of at most $T$ positions to derive a solution for the 3SAT instance. Such a check would result in an exponential time transformation. Additionally, to verify whether a solution is a valid solution for an \mbox{d-CMPS} instance, it would be necessary to iterate over exponential many positions, and therefore, the problem would also not be in $NP$.

\subsection{NP-hardness results}

To show the NP-hardness of \mbox{d-CMPS}, we provide a transformation from a 3SAT instance to a \mbox{d-CMPS} instance. The transformation for the 3SAT instance $\{c_1=\{x_1, x_2, x_3\}, c_2=\{\overline{x}_1, \overline{x}_2, x_4\}, c_3=\{x_2, \overline{x}_3, \overline{x}_4\}\}$ is shown in Figure~\ref{fig:cmps_3sat}. The general transformation is described in the proof of

\begin{figure}[t]
	\centering
	\includegraphics[scale=0.4]{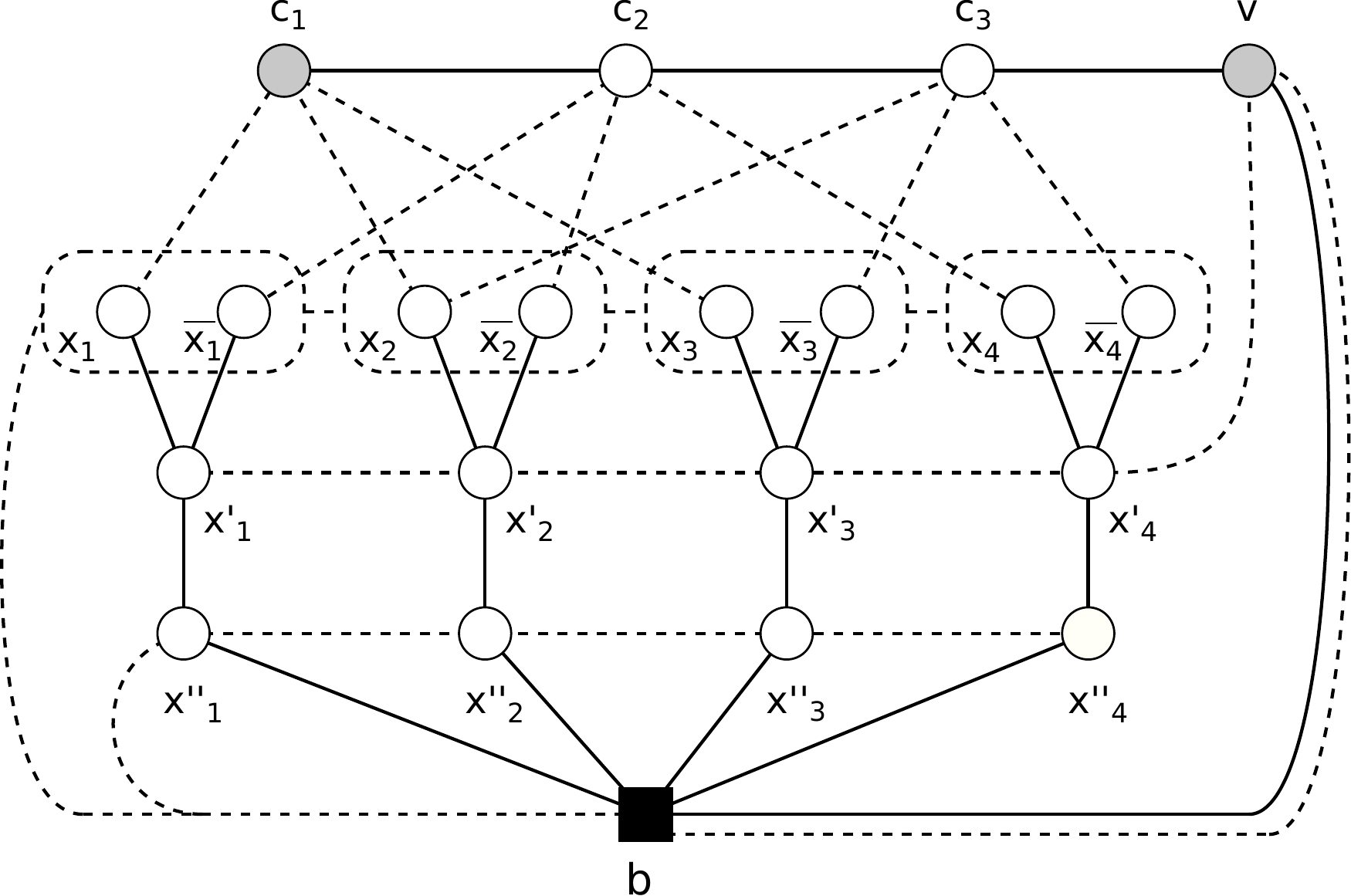}
	\caption{Example of a transformation from the 3SAT instance $\{c_1=\{x_1, x_2, x_3\}, c_2=\{\overline{x}_1, \overline{x}_2, x_4\}, c_3=\{x_2, \overline{x}_3, \overline{x}_4\}\}$ to a \mbox{d-CMPS} instance. The filled circles depict sensing locations, the solid lines depict edges from $E_M$ and the dashed lines depict edges from $E_C$. When there is a dashed edge between two dashed boxes, then there is an edge from $E_C$ between all pairs of vertices. There is no edge from $E_C$ within a dashed box.}
	\label{fig:cmps_3sat}
\end{figure}

\begin{proposition}
d-CMPS is NP-hard.
\end{proposition}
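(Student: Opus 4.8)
The plan is to reduce 3SAT to d-CMPS in polynomial time, following the structure suggested by Figure~\ref{fig:cmps_3sat}. Given a 3SAT instance with variables $x_1,\dots,x_\alpha$ and clauses $c_1,\dots,c_\beta$, I would build a gadget for each variable consisting of two internally disconnected branches (a ``true'' branch and a ``false'' branch) that a robot must traverse; choosing which branch to walk down encodes the truth assignment. Each branch contains sensing locations that also lie in the connectivity neighborhood (via $E_C$ dashed boxes) of the clause gadgets in which that literal appears. A single ``patrolling'' robot (or a small fixed number of robots), starting at a designated sensing location, must make a closed round trip of length at most $T$ that visits every sensing location while the induced connectivity subgraph stays connected at every step. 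The time bound $T$ is set just large enough to allow one pass: walking down one branch of each variable gadget and back, plus visiting the clause sensing locations, but \emph{not} large enough to traverse both branches of any variable gadget. Hence the robot is forced to commit to exactly one branch per variable, i.e.\ to a consistent truth assignment.

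The key steps, in order, are: (1) describe the variable gadgets and the clause gadgets explicitly, specifying $V$, $E_M$, $E_C$, the base vertex $b$, the sensing set $V_S$, the initial positions $p_0$, the robot count $r$, and the bound $T$, and check that all of these have size polynomial in $\alpha+\beta$ (so $1^T$ is a polynomially long string); (2) verify condition (iv) $r<n$ and condition (i) $p_0(i)\in V_S$ hold by construction; (3) prove the forward direction: a satisfying assignment yields a closed tour of length $\le T$ for the patrolling robot that visits all of $V_S$, while auxiliary/relay robots can be parked so that $G_C^t$ stays connected --- here the satisfied literal in each clause guarantees that some robot is adjacent in $G_C$ to the clause gadget when its sensing location is visited; (4) prove the converse: any valid d-CMPS sequence of length $\le T$ must, by the length budget, pick exactly one branch per variable gadget, and the connectivity requirement at each clause sensing location forces at least one of its three literal-branches to be the chosen one, yielding a satisfying assignment.

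The main obstacle is the reverse direction, specifically engineering the time bound $T$ and the graph distances so that the budget argument is tight: $T$ must be large enough that a satisfying assignment is realizable yet small enough that no tour can ``cheat'' by visiting both branches of a variable gadget or by taking shortcuts that would decouple branch choice from clause satisfaction. This requires careful bookkeeping of path lengths --- each detour down a branch and back costs a fixed known amount, and the slack in $T$ must be exactly the total of the mandatory detours. A secondary subtlety is maintaining connectivity of $G_C^t$ throughout: I would place the dashed $E_C$ boxes so that the moving robot is always within connectivity range of a static chain back to $b$ regardless of which branches it picks, so that connectivity never conflicts with the freedom to encode an arbitrary assignment, and the only place connectivity ``bites'' is at the clause sensing locations. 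Finally I would remark that membership in NP is immediate --- a witness is the sequence $(p_0,\dots,p_{t'})$ with $t'\le T$, which is polynomially bounded because $T$ is given in unary --- so d-CMPS is in fact NP-complete, and NP-hardness of CMPS as an optimization problem on general graphs follows.
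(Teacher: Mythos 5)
Your high-level template matches the paper's: a polynomial reduction from 3SAT with per-variable ``true/false'' branch gadgets, clause gadgets that the moving robot can only occupy while $E_C$-adjacent to an occupied chosen literal, a unary time bound, and the (correct) remark that membership in NP follows because the witness sequence has length $\le T$ with $T$ given in unary. However, there is a genuine flaw in your forcing mechanism. You place sensing locations in \emph{both} branches of each variable gadget and then argue that a tight budget $T$ forces the patrolling robot to traverse only one branch per variable. But d-CMPS requires that \emph{every} vertex of $V_S$ be visited within the horizon: if both branches contain sensing locations, every feasible solution must enter both branches of every variable gadget, so either $T$ is large enough to do that (and the branch choice no longer encodes anything) or the instance is a no-instance regardless of satisfiability. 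The truth assignment cannot be encoded by \emph{which sensing locations get visited}. A second, related problem is that the patrolling robot's past branch choices cannot supply connectivity when it later stands at a clause vertex --- it has left the branches by then --- so the assignment must be encoded in the \emph{current positions of stationary robots}, and your sketch never pins down who these robots are, how they are prevented from covering both branches of one variable, or from switching branches while the clauses are being visited.

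The paper resolves exactly these issues by inverting your design: $V_S$ contains only two vertices, $v$ and $c_1$, and no literal vertex is a sensing location. It uses $\alpha+1$ robots; robot $i+1$ starts at $x'_i$ and makes a single $E_M$-step to $x_i$ or $\overline{x}_i$, and a chain of $E_C$ edges $b\hbox{--}z_1\hbox{--}\cdots\hbox{--}z_\alpha$ (present for every combination of literal choices) keeps these robots connected to $b$ only as long as each occupies exactly one literal vertex. Robot 1 must make the round trip $v\to c_\beta\to\cdots\to c_1\to\cdots\to v$, which takes exactly $T=2\beta$ steps, and it is connected at $c_j$ iff some occupied literal of $c_j$ is an $E_C$-neighbor; the tightness of $T$ together with the $x'_i$/$x''_i$ connectivity structure (the $x'$-chain is anchored at $v$, which robot 1 has vacated) prevents re-assignment mid-trip. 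If you want to keep a visitation-budget flavor instead, the cleaner alternative is a direct reduction from Hamiltonian cycle with a complete connectivity graph, but that abandons the 3SAT gadgetry entirely; as written, your construction cannot be completed.
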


\begin{proof}
The transformation from an instance of 3SAT with variables $W=\{x_1, \ldots, x_\alpha\}$, and clauses $C=\{c_1, \ldots, c_\beta\}$ to an instance of \mbox{d-CMPS} with graphs $G_M$ and $G_C$ with $n=4\alpha+\beta+2$ vertices is defined as:
\begin{itemize}
	\item $r=\alpha+1 < n$
	\item $V=\{x_1, \ldots, x_\alpha, \overline{x}_1, \ldots, \overline{x}_\alpha, x'_1, \ldots, x'_\alpha, x''_1, \ldots, x''_\alpha, $\\$c_1, \ldots, c_\beta, b, v\}$
	\item $V_S=\{v, c_1\}$
	\item $[b,x''_i], [x''_i,x'_i], [x'_i, x_i], [x'_i, \overline{x}_i] \in E_M, 1 \leq i \leq \alpha$
	%\item $[x'_i, x_i], [x'_i, \overline{x_i}] \in E_M, 1 \leq i \leq \alpha$
	\item $[b, v], [v, c_\beta], [c_{i-1}, c_i] \in E_M, 2 \leq i \leq \beta$
	\item $[b,x''_1], [x''_{i-1},x''_i] \in E_C, 2 \leq i \leq \alpha$
	\item $[v,x'_\alpha], [x'_{i-1},x'_i] \in E_C, 2 \leq i \leq \alpha$
	\item $[b,z_1], [z_{i-1},z_i] \in E_C, 2 \leq i \leq \alpha$, $z_i \in \{x_i,\overline{x}_i\}$
	\item $[c_j,z_i] \in E_C$ if $z_i$ appears in $c_j$, $z_i \in \{x_i,\overline{x}_i\}$
	\item $p_0=(v, x'_1, \ldots, x'_\alpha)$
	\item $T=2\beta$
\end{itemize}

On one hand, a solution to the 3SAT instance gives a solution with $WI=2\beta-1$: A robot at vertex $x'_i$ moves to $\overline{x}_i$ or $x_i$ depending on the assignment of variable $x_i$. Since all clauses $c_j$ are satisfied, robot 1 in $v$ can move to $c_1$ and back to $v$. When the robot is in $v$, the other robots can move back to the $x'_i$ vertices.

On the other hand, every solution that results in a $WI=2\beta-1$ must result in an assignment for the variables $x_i$ that satisfies all clauses $c_j$. The only possibility to reach any $z_i$ is from the start position. If robot 1 moves to $c_\beta$, a robot in $x'_i$ has to move to a $z_i$ to keep robot 1 connected to the base station, and if robot 1 reaches $c_i$, the positions of the other robots constitutes a satisfying assignment for the 3SAT instance.
\end{proof}

The $\alpha$ vertices $x''_i$ above the base station are not necessary to show the NP-hardness of \mbox{d-CMPS}. Nevertheless, the use of these vertices can show that the problem cannot be approximated with any constant factor unless $P=NP$. The structure of the vertices $x'_i$, $x''_i$ and $v$ prevents that multiple robots gather at some $x'_i$ and move to some $x_i$ and $\overline{x}_i$ simultaneously or let the robots change the assignment while robot 1 is commuting between $c_1$ and $v$, which would solve the problem without solving the 3SAT instance. No matter how large the time bound $T$ is, commuting between $c_1$ and $v$ can only be done by solving the 3SAT instance.

\begin{figure}[t]
	\centering
	\includegraphics[scale=0.4]{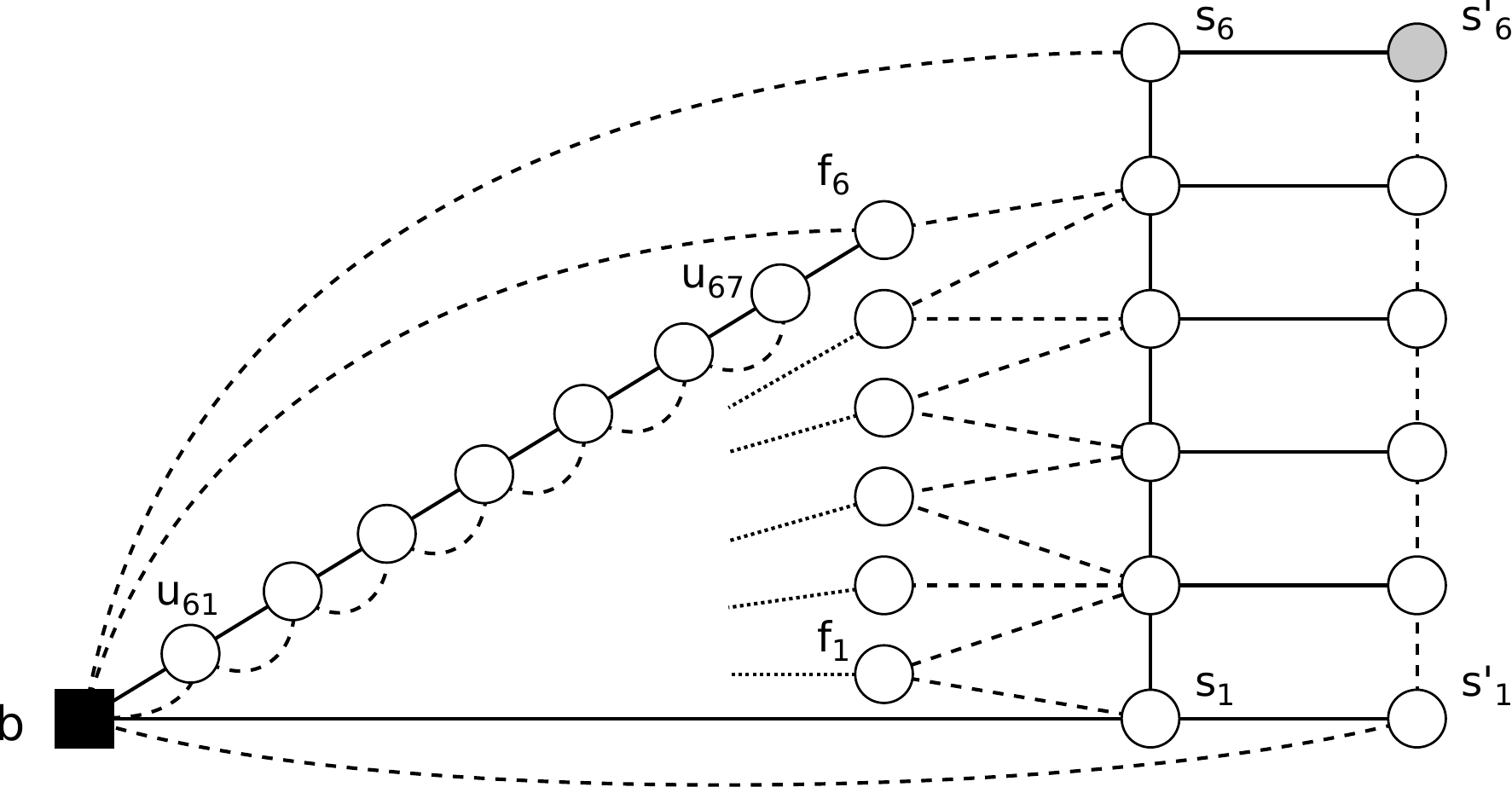}
	\caption{Example of a transformation from the SC instance $S=\{1, \ldots, 5\}, F=\{f_1=\{1,2\},f_2=\{2\},f_3=\{2,3\},f_4=\{3,4\},f_5=\{4,5\},f_6=\{5\}\}$ to a \mbox{d-CMR} instance. The filled circles depict sensing locations, the solid lines depict edges from $E_M$, the dashed lines depict edges from $E_C$. Only the path from $b$ to $f_6$ is shown, the paths between $b$ and $f_1$ through $f_5$ are indicted with a dotted line for better readability.}
	\label{fig:cmr_sc}
\end{figure}

In practical scenarios a predefined number of robots starts at a base station, which raises two related questions: How many robots are at least necessary to reach a certain vertex, and can each sensing location be reached from the base station? We define the decision problem \mbox{d-CMR} to show that the problem of determining the minimum number of robots necessary to reach a single vertex (as well as determining the minimum number of time steps to do so), when all robots start at the base station, is NP-hard (NP-complete). The second question is treated in the following Subsection~\ref{sec:traversal}.
\begin{definition}[d-CMR]
Problem \mbox{d-CMR} is a set of tuples of the form $(G_M,G_C,b,g,r,1^T)$ where $g$ is the goal vertex and $T$ is a time bound. The elements of the set have the following properties: (i)~There is a sequence $(p_0, \ldots, p_{t'}), t' \leq T$ such that $p_0 = (b, \ldots, b)$, $p_{t'}(i)=g$ for some robot $i$, (ii)~the vertex induced subgraph $G_C^t$ is connected for $0 \leq t \leq t'$, and (iii)~${r<n}$.
\end{definition}

To show that \mbox{d-CMR} is NP-hard we provide a transformation from set cover\footnote{An instance of SC consists of a set $S=\{s_1, \ldots, s_\alpha\}$, a subset family $F=\{f_1, \ldots, f_\beta\}$, $f_i\subseteq S$, and a number $k$. The question is, whether there are at most $k$ subsets from $F$ such that their union is equal to $S$.} (SC) to \mbox{d-CMR}. The transformation from an SC instance with elements $S=\{1, \ldots, 5\}$ and subset family $F=\{f_1=\{1,2\},f_2=\{2\},f_3=\{2,3\},f_4=\{3,4\},f_5=\{4,5\},f_6=\{5\}\}$ is shown in Figure~\ref{fig:cmr_sc}. The general transformation is described in the proof of

\begin{proposition}
\mbox{d-CMR} is NP-hard.
\end{proposition}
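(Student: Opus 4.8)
The plan is to reduce from set cover (SC), using the same high‑level trick as in the \mbox{d-CMPS} proof: force a single ``worker'' robot to crawl along the unique corridor leading to $g$, so that at the moment it sits next to SC‑element $i$ it stays connected to $b$ only if element $i$ is currently ``covered'' by a committed chain of relay robots. Given an SC instance $(S,F,k)$ with $S=\{1,\dots,\alpha\}$, $F=\{f_1,\dots,f_\beta\}$ (WLOG $\bigcup F=S$ and $k\le\beta$, else output a trivial no‑instance), I would build a ``broom'' movement graph $G_M$: a handle $b,s_1,\dots,s_\alpha,g$ with $[b,s_1],[s_i,s_{i+1}],[s_\alpha,g]\in E_M$, and one bristle of length $L:=\alpha+2$ hanging off $b$ for every $f_j$, i.e. a path $b,t^{(1)}_j,\dots,t^{(L)}_j$ with all edges in $E_M$. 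Thus the only $b$--$g$ path in $G_M$ runs through $s_1,\dots,s_\alpha$ in order, and each bristle is a dead end. In $G_C$ I would put only: the link $[b,g]$; the bristle edges $[b,t^{(1)}_j]$ and $[t^{(m)}_j,t^{(m+1)}_j]$ (so a robot at a tip $t^{(L)}_j$ reaches $b$ in $G_C$ only if \emph{all} $L$ bristle vertices are occupied --- ``selecting'' $f_j$ costs a full chain of $L$ robots); and $[s_i,t^{(L)}_j]$ exactly when $i\in f_j$ --- crucially, the handle edges $[s_i,s_{i+1}]$ are \emph{not} in $E_C$. Finally set $r:=kL+1$, $T:=2\alpha+3\ (=L+\alpha+1)$, and goal $g$. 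The instance has $n=\beta L+\alpha+2$ vertices, is polynomial in $\alpha,\beta$, and $r<n$.

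For the forward direction: given a subcover $f_{j_1},\dots,f_{j_m}$ with $m\le k$, during steps $1,\dots,L$ stream $L$ robots up each selected bristle (possible in $L$ steps since robots stack behind one another, using $mL\le r-1$ robots), park the remaining robots and the worker at $b$, then walk the worker $b\to s_1\to\cdots\to s_\alpha\to g$ in steps $L+1,\dots,T$. Connectivity holds at every step: bristle chains reach $b$; parked robots and the worker (while at $b$ or $g$) reach $b$ directly; and while the worker is at $s_i$ it reaches $b$ through $[s_i,t^{(L)}_j]$ for whichever selected $f_j$ contains $i$, which exists because the $f_{j_\ell}$ cover $S$.

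For the converse: take any valid solution of length $t'\le T$ ending with a robot at $g$, and let $i^\star$ be the first robot to reach $g$; since the handle is the unique $b$--$g$ path in $G_M$, robot $i^\star$ occupies every $s_i$ at some step. Call bristle $j$ \emph{active at step $t$} if all of $t^{(1)}_j,\dots,t^{(L)}_j$ are occupied at $t$; by the definition of $E_C$, a robot at $s_i$ is connected to $b$ in $G_C^t$ only if some bristle $j$ with $i\in f_j$ is active at $t$. Since bristles are pairwise vertex‑disjoint, at most $\lfloor r/L\rfloor=k$ bristles are active at any single step; moreover every robot must pass through $b$ to change bristles, tips are at movement‑distance $L$ from $b$, and $T<2L$, so there is not enough time to free a full bristle and then fill a different one --- hence the set $B$ of bristles active at \emph{some} step of the solution satisfies $|B|\le k$. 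Because $i^\star$ visits every $s_i$ while staying connected, each $i\in S$ lies in some $f_j$ with $j\in B$, so $\{f_j:j\in B\}$ is a subcover of size $\le k$. This establishes the equivalence; together with the NP-hardness of SC (and the fact that a solution sequence has polynomial length, since $T$ is given in unary, and is polynomial‑time checkable) it gives NP-hardness (indeed NP-completeness) of \mbox{d-CMR}.

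The step I expect to be the main obstacle is the ``no mid‑walk reconfiguration'' claim in the converse: ruling out that the active bristle set drifts from one subset to another while the worker crawls down the handle --- precisely the loophole that the $x'_i,x''_i$ vertices close in the \mbox{d-CMPS} proof. The length‑$L$ bristles together with the tight bound $T=2\alpha+3<2L$ are the intended safeguard (a full tear‑down‑and‑rebuild costs $\ge 2L$ steps, which the budget forbids, and a partial tear‑down only removes coverage), but making this fully quantitative --- accounting for the worker idling, for several robots possibly on the handle, and for partially built bristles --- is where the argument needs the most care; the remaining work is routine bookkeeping on vertex counts and on which $G_C$-paths can connect $s_i$ to $b$.
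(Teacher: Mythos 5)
Your reduction is from the same source problem (set cover) and has the same broad shape as the paper's --- a corridor that a worker must walk while ``coverage'' robots committed to chosen sets keep it connected --- but your accounting of what it costs to select a set has a genuine hole, and it sits exactly where you predicted. The claim that ``a robot at a tip $t^{(L)}_j$ reaches $b$ in $G_C$ only if all $L$ bristle vertices are occupied'' is false as soon as any handle vertex is occupied: since $[s_{i''},t^{(L)}_j]\in E_C$ whenever $i''\in f_j$, a robot parked at such an $s_{i''}$ (and connected to $b$ through some other, fully occupied bristle) lets a \emph{single} robot hold the tip $t^{(L)}_j$. Concretely, a crew of $L-1$ robots can climb bristle $j$, push one robot onto the tip at the same step at which a bridge robot arrives at $s_{i''}$, and then retreat, leaving one robot at the tip connected via $s_{i''}$ and a different bristle. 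So selecting a set need not cost $L$ permanently committed robots, the quantity ``number of bristles that are fully active at some step'' is not the set of sets used to cover $S$, and the bound $\lfloor r/L\rfloor=k$ does not give $|B|\le k$ in the sense you need: the worker at $s_i$ can be connected through a bristle that is never fully active. Your $T<2L$ argument rules out tearing down and rebuilding full bristles sequentially, but not this parallel ``cheap tip'' exploit. The construction may well still be sound --- one can argue that the \emph{first} arrival at any tip forces $L-1$ robots to form a prefix of that bristle, and that within $T<2L$ these prefix crews are essentially pairwise disjoint, which again caps the number of reachable tips at $k$ --- but that argument is not in your write-up, and without it the converse direction does not go through.

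For comparison, the paper's construction avoids this trap by separating the two costs: each subset vertex $f_j$ is adjacent to $b$ directly in $E_C$, so a robot parked there stays connected on its own; the expensive part is a chain of $M$ escort robots along a private path $u_{j1},\dots,u_{j(M+1)}$ that is needed only transiently to deliver a robot to $f_j$ and is then fully reusable. With $r=M+k$ this makes ``at most $k$ parked subset robots'' an immediate counting fact rather than a timing fact, and the goal gadget (a chain $s_1,\dots,s_M$ that must be simultaneously occupied and then step sideways to $s'_1,\dots,s'_M$) forces all elements to be covered at one instant, so no mid-walk drift argument is needed at all. If you want to keep your single-worker corridor, you should either give the connectivity of parked set-robots to $b$ for free (as the paper does) and charge for delivery instead, or supply the missing quantitative lemma about first arrivals at tips.
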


\begin{proof}
The transformation from an instance of SC with elements $S=\{s_1, \ldots, s_\alpha\}$, a subset family $F=\{f_1, \ldots, f_\beta\}$, and a number $k$ (the trivial case $k \geq \beta$ can be ignored) to an instance of \mbox{d-CMR} with $n=1+(M+2)\beta+2M$ is defined as (with $M:=\max{\{\alpha, \beta\}}$):
\begin{itemize}
	\item $r=M+k < M+\beta < n$
	\item $V=\{b, u_{11}, \ldots, u_{1(M+1)}, \ldots, u_{\beta 1}, \ldots, u_{\beta(M+1)},$ \\ $f_1, \ldots, f_\beta, s_1, \ldots, s_M, s'_1, \ldots s'_M\}$
	\item $[b,u_{i1}], [u_{i(M+1)},f_i]\in E_M, 1\leq i \leq \beta$
	\item $[u_{ij},u_{i(j+1)}] \in E_M, 1\leq i \leq \beta, 1 \leq j \leq M$
	\item $[b,s_1], [s_j,s_{j+1}] \in E_M, 1 \leq j \leq M-1$
	\item $[s_j, s'_j] \in E_M, 1 \leq j \leq M$
	
	\item $[b,u_{i1}], [b,f_i]\in E_C, 1\leq i \leq \beta$
	\item $[u_{ij},u_{i(j+1)}] \in E_C, 1\leq i \leq \beta, 1 \leq j \leq M$
	\item $[b,s'_1], [s'_j,s'_{j+1}] \in E_C, 1 \leq j \leq M-1$
	\item $[b, s_M] \in E_C$
	\item $[f_j,s_i] \in E_C$ if $s_i \in f_j$
	%\item see Fig.~\ref{fig:cmr_sc} for the remaining edges of $E_M$ and $E_C$
	\item $g=s'_M$
	\item $T=2\beta(M+1)+M+1$
\end{itemize}

A solution of the SC instance with $S=f_{j_1} \cup \ldots \cup f_{j_l}$ and $l \leq k$ results in the following solution for the instance of \mbox{d-CMR}: $l$ robots are placed at vertices $f_{j_1}, \ldots, f_{j_l}$ subsequently with help of $M$ robots that build a chain along the vertices $u_{j1}$ to $u_{jM}$ on a path to a vertex $f_j$ ($j \in \{j_1, \ldots, j_l\}$) and return to $b$. The sequence for placing one robot at $f_6$ in the example in Figure~\ref{fig:cmr_sc} is given as follows. Note that only the positions for the $M+1=7$ robots, which are required to reach $f_6$, are shown:

\begingroup
\setlength{\tabcolsep}{3pt} % Default value: 6pt
\renewcommand{\arraystretch}{1} % Default value: 1
\begin{tabular}{lllllllll}
	(\ldots, & $b$,      &$b$,      &$b$,      &$b$,      &$b$,      &$b$,      &$b$,      &\ldots) \\
	(\ldots, & $u_{61}$, &$u_{61}$, &$u_{61}$, &$u_{61}$, &$u_{61}$, &$u_{61}$, &$u_{61}$, &\ldots) \\
	(\ldots, & $u_{61}$, &$u_{62}$, &$u_{62}$, &$u_{62}$, &$u_{62}$, &$u_{62}$, &$u_{62}$, &\ldots) \\
	(\ldots, & $u_{61}$, &$u_{62}$, &$u_{63}$, &$u_{63}$, &$u_{63}$, &$u_{63}$, &$u_{63}$, &\ldots) \\
	& & & & \ldots & & & \\
	(\ldots, & $u_{61}$, &$u_{62}$, &$u_{63}$, &$u_{64}$, &$u_{65}$, &$u_{66}$, &$u_{67}$, &\ldots) \\
	(\ldots, & $u_{61}$, &$u_{62}$, &$u_{63}$, &$u_{64}$, &$u_{65}$, &$u_{65}$, &$f_{6}$,  &\ldots) \\
	(\ldots, & $u_{61}$, &$u_{62}$, &$u_{63}$, &$u_{64}$, &$u_{64}$, &$u_{64}$, &$f_{6}$,  &\ldots) \\
	& & & & \ldots & & & \\
	(\ldots, & $b$,      &$b$,      &$b$,      &$b$,      &$b$,      &$b$,      &$f_{6}$,  &\ldots)
\end{tabular}
\endgroup

After $l$ robots have been placed at $f_{j_1}, \ldots, f_{j_l}$, each vertex $s_i$ is connected to the base station (since the set $S$ is covered) and can be occupied by a robot. In the last step all these $M$ robots move to the vertices $s'_i$ at the same time along the edges $[s_i, s'_i] \in E_M, 1 \leq i \leq M$.

There are $M$ robots necessary to place a robot at a vertex $f_j$ or to change the position of a robot from a vertex $f_{j_1}$ to another vertex $f_{j_2}$. Therefore, at most $k-1$ robots can be placed at vertices $s'_1$ to $s'_{k-1}$, and since $k<M$, the vertex $s'_M$ cannot be reached by changing the positions of robots at vertices $f_j$. This means that before the last robot can move to the goal $s'_M=g$, all vertices $s_1$ to $s_\alpha$ must be connected to the base station. This is only possible if at most $k$ robots have been placed at vertices $f_{j_1}, \ldots, f_{j_l}$, and a solution for the SC instance is determined by the occupied vertices $f_{j_1}, \ldots, f_{j_l}$ at that moment when another robot reaches the goal vertex. %which results in a solution of SC that covers all elements of $S$ with at most $k$ subsets.
\end{proof}

A natural strategy for the reachability problem when ${E_M \subseteq E_C}$ is that a group of robots start at the base station and move on a path from the base station to the goal in $G_M$ until the connectivity to the base station would break when moving further. Every time when this happens, a relay robot stays at the current vertex and the remaining robots continue moving towards the goal \cite{Mosteo2009}. Determining the optimal relay positions such that a goal can be reached with a predefined number of robots on a predefined path to the goal is also NP-hard, which we show with a transformation from 3SAT to \mbox{d-CMRD} (see Proposition~\ref{prop:cmrd} and Figure~\ref{fig:cmrd_3sat}) defined in

\begin{definition}[d-CMRD]
The problem \mbox{d-CMRD} is a set of tuples of the form $(P, E_C, r)$ where $P:=(b, v_1, \ldots, v_n)$ is a sequence of vertices that describe a movement path, $E_C$ denotes the connectivity edges between the vertices (at least $[b,v_1], [v_{i-1},v_i] \in E_C$), and $r$ is the number of robots. The elements of the set have the property that $v_n$ can be reached with placing $r-1$ robots at relay positions at some $v_i$.
\end{definition}

\begin{proposition}
\label{prop:cmrd}
\mbox{d-CMRD} is NP-hard.
\end{proposition}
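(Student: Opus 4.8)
The plan is to give a polynomial-time reduction from 3SAT, reusing the architecture of the \mbox{d-CMPS} reduction but adapting it to the relay-dropping model, in which a dropped robot is immobile and a single robot must physically walk the whole movement path $P$ from $b$ to $v_n$.

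Given a 3SAT instance with variables $x_1,\dots,x_\alpha$ and clauses $c_1,\dots,c_\beta$, I would take $P=(b,v_1,\dots,v_n)$ to be the concatenation of three stretches. The first stretch is a chain of $\alpha$ small \emph{variable gadgets}; the gadget for $x_i$ is built so that traversing it forces exactly one relay robot to be left behind, and that relay must sit either on a vertex $t_i$ (read as $x_i=\mathrm{True}$) or on a vertex $f_i$ (read as $x_i=\mathrm{False}$). As in the \mbox{d-CMPS} proof, auxiliary ``ladder'' vertices inside the gadget (playing the role of the $x'_i,x''_i$ vertices there) are what prevent robots from being stockpiled and dropped together so as to circumvent the binary choice. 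I add to $E_C$ the edges $[b,t_1],[b,f_1]$ and, for $2\le i\le\alpha$, all four edges between $\{t_{i-1},f_{i-1}\}$ and $\{t_i,f_i\}$, so that whichever choice vertices $z_1\in\{t_1,f_1\},\dots,z_\alpha\in\{t_\alpha,f_\alpha\}$ receive relays, these relays form a connected chain hanging off $b$, exactly as the literal chain $b\!-\!z_1\!-\!\cdots\!-\!z_\alpha$ does in the \mbox{d-CMPS} construction. The second stretch makes the robot visit $\beta$ \emph{clause checkpoints} $q_1,\dots,q_\beta$; I put $[q_j,t_i]\in E_C$ when $x_i$ occurs positively in $c_j$ and $[q_j,f_i]\in E_C$ when it occurs negatively, with the gadgets arranged (recalling that $E_C$ already contains all path edges, so the taut relay chain must be kept away from each $q_j$ except through a clause-relevant edge) so that the walking robot is connected to $b$ while standing on $q_j$ precisely when some already-dropped choice relay satisfies $c_j$. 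The third stretch is a final edge to $v_n$. I then fix $r$ so that $r-1$ equals exactly the number of relays the $\alpha$ variable gadgets are forced to absorb plus the constant overhead needed to walk the checkpoint stretch and the last edge, i.e., the budget is exactly tight. With these choices a satisfying assignment produces relay positions that carry the robot all the way to $v_n$, and conversely any successful traversal forces, by the time the robot stands on the last checkpoint, a choice-relay configuration satisfying every clause; hence the \mbox{d-CMRD} instance is a YES-instance iff the 3SAT instance is satisfiable. It remains only to check that $n$, $|E_C|$ and $r$ are polynomial in $\alpha+\beta$ and that $r<n$.

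The step I expect to be the main obstacle is the forcing argument. Because relays cannot be repositioned once dropped and the robot budget is exactly tight, the variable gadget and the spacing of the vertices along $P$ must be engineered so that (a) the robot genuinely cannot cross gadget $i$ unless a relay lands on $t_i$ or $f_i$; (b) it is never profitable to spend two relays inside one gadget (which could otherwise let the robot ``rewire'' an assignment halfway along $P$); and (c) the relays needed merely to walk the checkpoint stretch cannot double as extra literal relays. Getting the ladder vertices and the exact value of $r$ right is the delicate part; once that is done, membership in NP is routine (a relay placement is a polynomial-size certificate, verifiable by simulating the traversal), though the statement claims only NP-hardness.
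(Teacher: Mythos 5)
Your high-level architecture coincides with the paper's: a per-variable binary choice of relay position (your $t_i$ vs.\ $f_i$, the paper's $x_i$ vs.\ $\overline{x}_i$), those choice vertices linked into a chain hanging off $b$ in $E_C$, clause checkpoints whose only useful $E_C$ neighbours are their satisfying literals, and a relay budget that is exactly tight. The gap is the one you flag yourself: you never construct the gadget that \emph{forces} exactly one relay onto $\{t_i,f_i\}$ for every $i$, rules out double-spending relays inside a gadget, and prevents spare relays from parking on the path vertex just before a checkpoint (which, since path edges lie in $E_C$, would connect every $q_j$ for free). Those three properties are precisely the content of the backward direction of the reduction, so as written the proposal is a plan rather than a proof.

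The paper's device is worth noting because it dispenses with in-gadget ``ladders'' entirely. It takes $P=(b,x_1,\overline{x}_1,\ldots,x_\alpha,\overline{x}_\alpha,x'_1,\ldots,x'_\alpha,c_1,\ldots,c_\beta)$ with $r=\alpha+1$ and does the forcing \emph{after} the literal stretch, via a verification stretch $x'_1,\ldots,x'_\alpha$: in $E_C$ each $x'_i$ can reach $b$ only through a relay sitting on $x_i$ or $\overline{x}_i$ that is chained back to $b$ through the edges $[b,x_1]$ and $[z_{i-1},z_i]$ (its other $E_C$ neighbours are path vertices that are unoccupied by the time the lone walking robot arrives, or are themselves literal vertices of some variable). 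Standing on $x'_i$ therefore certifies one relay on variable $i$'s pair and an unbroken literal chain to $b$; with $\alpha$ relays and $\alpha$ such certificates, exactly one relay per variable is forced, no relay is left over to sit on $x'_\alpha$ or any clause vertex's predecessor, and the clause vertices then certify satisfaction exactly as you intend. This one construction discharges your obstacles (a), (b) and (c) simultaneously. To complete your version you would either adopt such a post-hoc verification segment or actually build the local gadget and prove the three properties you list; until then the equivalence with 3SAT is asserted, not established.
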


\begin{proof}
The transformation from an instance of 3SAT with variables $W=\{x_1, \ldots, x_\alpha\}$, and clauses $C=\{c_1, \ldots, c_\beta\}$ to an instance of \mbox{d-CMPS} with graphs $G_M$ and $G_C$ with $n=3\alpha+\beta+1$ vertices is defined as:
\begin{itemize}
	\item $r=\alpha+1$
	\item $P=(b, x_1, \overline{x}_1, \ldots, x_\alpha, \overline{x}_\alpha, x'_1, \ldots, x'_\alpha, c_1, \ldots, c_\beta)$
	\item $[\overline{x}_\alpha,x'_1], [x'_{i-1},x'_i] \in E_C, 2 \leq i \leq \alpha$
	\item $[x'_\alpha,c_1], [c_{i-1},c_i] \in E_C, 2 \leq i \leq \beta$
	\item $[b,x_1], [\overline{x}_{i-1},x_i] \in E_C, 2 \leq i \leq \alpha$
	\item $[x_i,\overline{x}_i] \in E_C, 1 \leq i \leq \alpha$
	\item $[z_{i-1},z_i] \in E_C, 2 \leq i \leq \alpha$, $z_i \in \{x_i,\overline{x}_i\}$
	\item $[z_i,x'_i] \in E_C, 1 \leq i \leq \alpha$, $z_i \in \{x_i,\overline{x}_i\}$
	\item $[c_j,z_i] \in E_C$ if $z_i$ appears in $c_j$, $z_i \in \{x_i,\overline{x}_i\}$
\end{itemize}
A solution of the 3SAT instance defines the positions of the $\alpha$ relays such that the robot $\alpha+1$ can reach the goal vertex~$c_\beta$.

If a robot can reach $c_\beta$, then the vertices $x'_1, \ldots, x'_\alpha$, $c_1, \ldots, c_\beta$ are connected to the base station, which is only possible if a relay is placed at each $z_i$ such that the 3SAT instance is satisfied.
\end{proof}

\begin{figure}[t]
	\centering
	\includegraphics[scale=0.4]{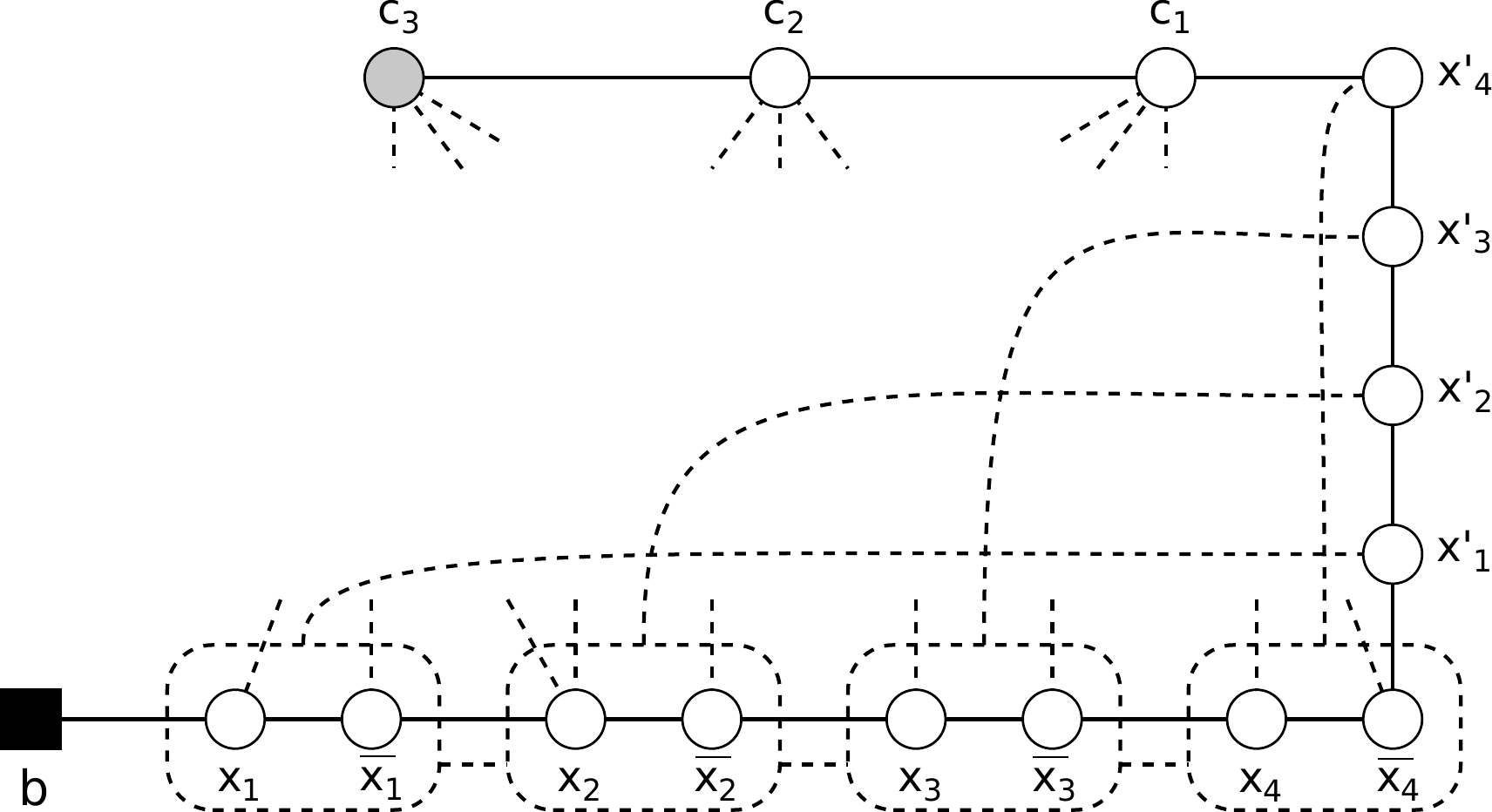}
	\caption{Example of a transformation from a 3SAT instance (same example as in Figure~\ref{fig:cmps_3sat}) to a \mbox{d-CMRD} instance. When there is a dashed edge between two dashed boxes, then there is an edge from $E_C$ between all pairs of vertices. There is also an edge from $E_C$ between the vertices in a dashed box (since ${E_M\subseteq E_C}$).}
	\label{fig:cmrd_3sat}
\end{figure}

\subsection{Note on graph traversal}
\label{sec:traversal}

A precondition for the existence of a solution to a persistent surveillance problem (where the robots start at the base station) is that all sensing locations can be reached from the base station with the available number of robots. In the previous subsection we have shown that determining the minimum number of time steps to reach a goal vertex from the base station is NP-hard. In this subsection we argue that determining the minimum number of robots necessary to reach a goal vertex is also NP-hard. We do not attempt to determine a solution with the minimum number of robots but provide a necessary and sufficient condition for the existence of a solution with $n-1$ robots, since there are graphs for which no solution exists even if $G_M$ and $G_C$ is connected.

We say that a graph can be traversed with $r$ robots if there is a solution such that every sensing location can be visited with $r$ robots starting at the base station. Although a graph cannot be traversed if sensing locations and the base station are at different connected components of $G_M$ or $G_C$, the fact that $G_M$ and $G_C$ are connected, is not a sufficient condition for a graph to be traversable. Such an example is shown in Figure~\ref{fig:example_simple_1} where both $G_M$ and $G_C$ are connected but the graph cannot be traversed. If a robot moves from the base station to vertex $v$ or $w$ (because $[b,v] \in E_M$ and $[b,w] \in E_M$), it will be disconnected from the base station. The graph in Figure~\ref{fig:example_simple_2} can be traversed with two robots but not with one (although the distance in $G_C$ is $dist_{G_C}(b,w)=1$). An example sequence for traversal is $p_0=(b,b), p_1=(u,u), p_2=(u,v), p_3=(u,w)$.

\begin{figure}[t]
	\centering
	\begin{tabular}{cc}
		\subfloat[]{
			\includegraphics[scale=0.4]{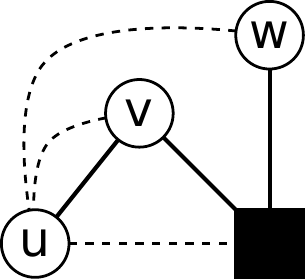}
			\label{fig:example_simple_1}
		}
		&
		\subfloat[]{
			\includegraphics[scale=0.4]{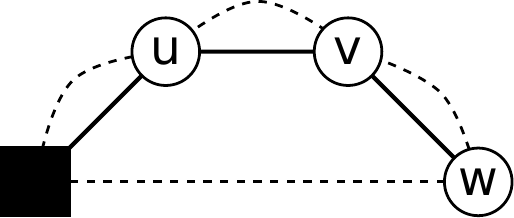}
			\label{fig:example_simple_2}
		}
	\end{tabular}
	\caption{Simple examples for graph traversals. The graph in (a) cannot be traversed although $G_M$ and $G_C$ are connected. The graph in (b) can be traversed with two but not with one robot.}
\end{figure}

\begin{figure}[t]
	\centering
	\begin{tabular}{cc}
		\subfloat[]{
			\includegraphics[scale=0.4]{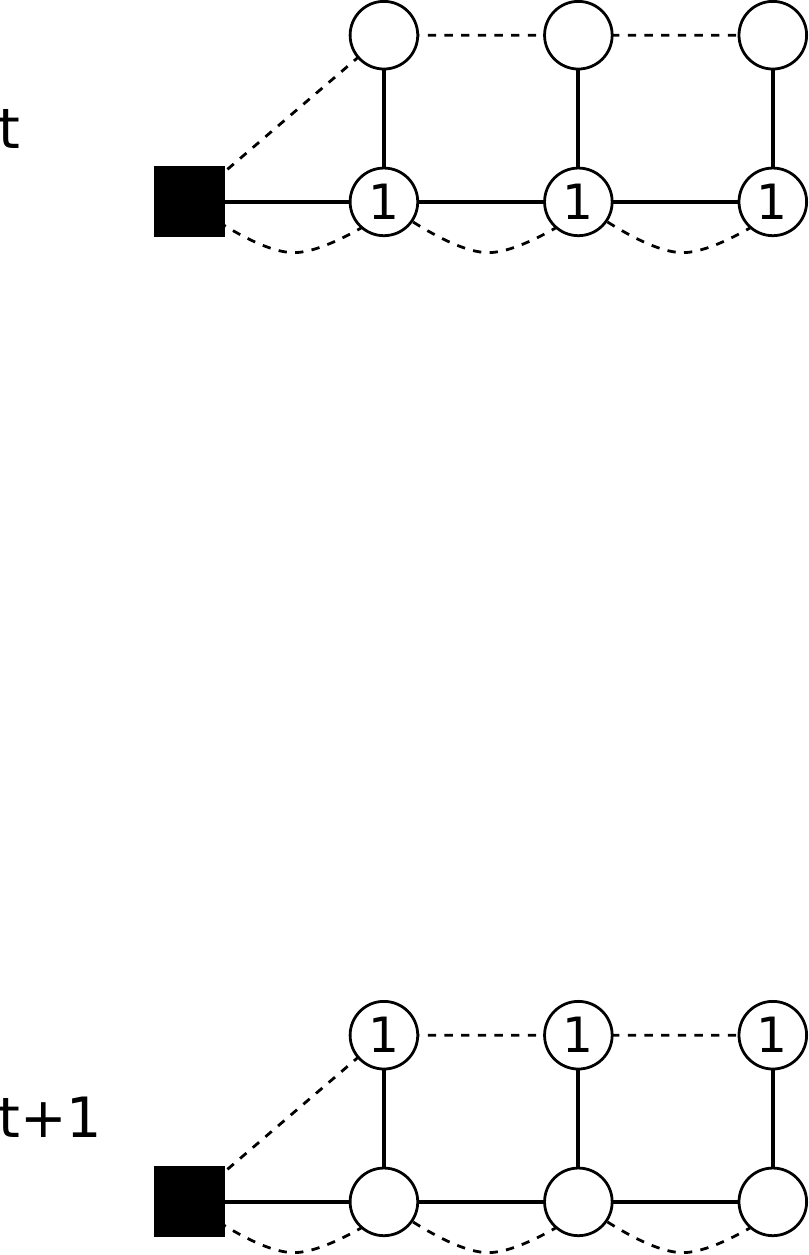}
			\label{fig:example_traverse_1}
		}
		&
		\subfloat[]{
			\includegraphics[scale=0.4]{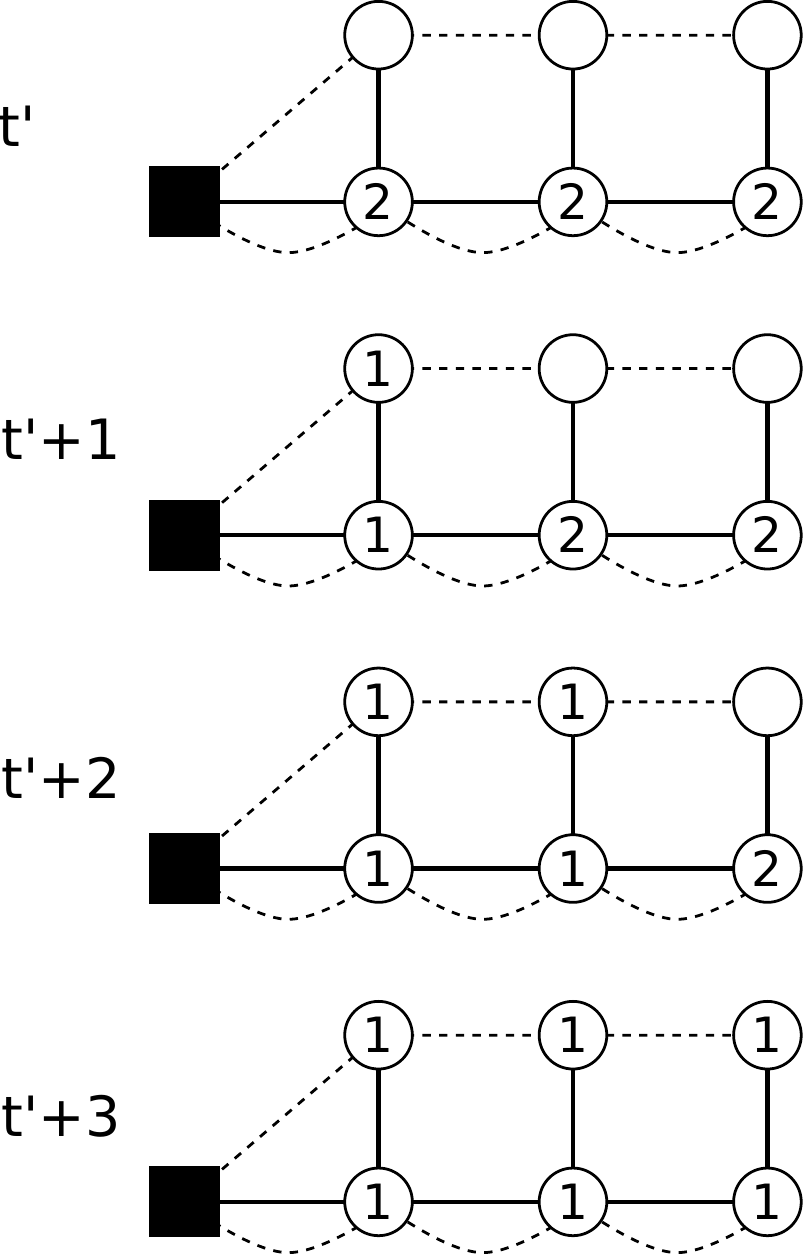}
			\label{fig:example_traverse_2}
		}
	\end{tabular}
	\caption{Illustration for the proof of Proposition~\ref{prop:traverse}. The graph in (a) is traversed by three robots in two time steps (from top to bottom), which move from the bottom to the top vertices simultaneously. This sequence corresponds to solution $P$ in the text. In (b) the order in which the vertices are marked by Algorithm~\ref{alg:traverse} are shown in four steps from top to bottom, which can be interpreted as traversal with 6 robots. This sequence corresponds to solution $P'$ in the text. The numbers in the vertices indicate the number of robots that are at a vertex at a particular time step.}
	\label{fig:example_traverse}
\end{figure}

Algorithm~\ref{alg:traverse} determines whether a graph can be traversed with $n-1$ robots. We show that it terminates with $traverse=true$ if and only if the graph can be traversed with $n-1$ robots. Although using $n-1$ robots is a trivial solution, determining a solution with the minimum number of robots such that a graph can be traversed also solves the SC instance of \mbox{d-CMR}. This is because a solution with a minimum number of $r^*$ robots corresponds to a set cover of cardinality $r^*-M$: If the goal $s'_M$ can be reached with $r^*$ robots, then $M$ robots are necessary for the chain $s'_1, \ldots, s'_M$, and $r^*-M$ robots are at vertices $f_i$ such that all $s_1, \ldots, s_M$ are connected to $b$.

\begin{algorithm}[t]
	\caption{Traverse}
	\label{alg:traverse}
	\small
	\begin{algorithmic}[1] % The number tells where the line numbering should start
		\Require
			\Statex \begin{flushleft} $V_S$, $G_M$, $G_C$, $b$\end{flushleft}%to avoid spacing to fill line
		\Ensure
			\Statex \begin{flushleft} $parent(\cdot), traverse$ \end{flushleft}
		\State $parent(v) \gets null, marked(v) \gets false, \forall v \in V$
		\State $marked(b) \gets true$
		\State $added \gets true$
		\While{$added$}
			\State $added \gets false$
			\For{$v \in V$}
				\If{$!marked(v)$}
					\State $p \gets null, c \gets false$
					\If{$\exists [v,w] \in E_M$ s.t. $marked(w)$}
						\State $p \gets w$
					\EndIf
					\If{$\exists [v,u] \in E_C$ s.t. $marked(u)$}
						\State $c \gets true$
					\EndIf
					\If{$p \neq null$ and $c$}
						\State $marked(v) \gets true$
						\State $parent(v) \gets p$
						\State $added \gets true$
					\EndIf
				\EndIf
			\EndFor
		\EndWhile
		\State $traverse \gets |\{v \in V_S: marked(v)\}| = |V_S|$
	\end{algorithmic}
\end{algorithm}

\begin{proposition}
\label{prop:traverse}
Algorithm~\ref{alg:traverse} terminates with $traverse=true$ if and only if the graph can be traversed with $n-1$ robots.
\end{proposition}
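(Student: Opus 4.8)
The plan is to work entirely with the set $M^*$ of vertices that carry $marked=true$ when Algorithm~\ref{alg:traverse} halts, and to establish three things: the loop terminates; if $V_S\subseteq M^*$ then a valid traversal with $n-1$ robots exists; and conversely, if a valid traversal with $n-1$ robots exists then every vertex ever occupied lies in $M^*$, so in particular $V_S\subseteq M^*$. Termination is immediate, since $marked(\cdot)$ is monotone and each pass of the \texttt{while} loop either exits or marks a new vertex, giving at most $n$ passes. Throughout I would order the vertices of $M^*$ as $v_0=b,v_1,\dots,v_m$ (with $m=|M^*|-1\le n-1$) by the time they become marked; by construction each $v_k$ with $k\ge 1$ has a value $parent(v_k)$ with $[v_k,parent(v_k)]\in E_M$ and a "connectivity witness" $u_k$ with $[v_k,u_k]\in E_C$, and both $parent(v_k)$ and $u_k$ are marked strictly before $v_k$, hence lie in $\{v_0,\dots,v_{k-1}\}$.

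For the "if" direction I would first show by induction on $k$ that $G_C\langle\{v_0,\dots,v_k\}\rangle$ is connected: the step adds $v_k$, which is $E_C$-adjacent to $u_k\in\{v_0,\dots,v_{k-1}\}$, to an already connected subgraph. Then I would prove, again by induction on $k$, that from $p_0=(b,\dots,b)$ the robots can reach a connected configuration in which each of $v_0,\dots,v_k$ carries a robot and every remaining robot sits at $b$. For the step from $k-1$ to $k$ (here $k\le m\le n-1$, so at least one robot is still idle at $b$), take one idle robot and walk it from $b$ to $parent(v_k)$ along the $E_M$-edges of the $parent$-chain $parent(v_k),parent(parent(v_k)),\dots,b$ traversed in reverse; every vertex on this chain was marked before $v_k$, hence lies in $\{v_0,\dots,v_{k-1}\}$ and already carries a robot, so each intermediate configuration has the same occupied-vertex set and stays connected. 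Finally move this robot from $parent(v_k)$ to $v_k$ along $[parent(v_k),v_k]\in E_M$; the new occupied set is $\{v_0,\dots,v_k\}$, connected by the first induction. After reaching $k=m$ the robots stay in place forever; every $v\in V_S\subseteq M^*$ is visited (those equal to $b$ already at time $0$), and at most $n-1$ robots are used, so the graph is traversed with $n-1$ robots.

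For the "only if" direction, assume a valid solution $\pi=(p_0,p_1,\dots)$ with $p_0=(b,\dots,b)$, connectivity at every step, and every $v\in V_S$ visited. I claim every vertex ever occupied in $\pi$ lies in $M^*$, which gives $V_S\subseteq M^*$ and hence $traverse=true$. Suppose not and let $t^*\ge 1$ be the first time some robot occupies a vertex outside $M^*$. Partition the vertices of $G_C^{t^*}$ into $A\subseteq M^*$ (which contains $b$) and $B\ne\emptyset$ with $B\cap M^*=\emptyset$. Since $G_C^{t^*}$ is connected and both parts are nonempty, some edge $[a,v]\in E_C$ crosses, with $a\in A$ (marked) and $v\in B$ (not marked); moreover $v$ is occupied at $t^*$ by a robot which at $t^*-1$ sat on a vertex $w$ that, by minimality of $t^*$, lies in $M^*$, and $[w,v]\in E_M$. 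Thus $v\notin M^*$ has, with respect to the final marked set, a marked $E_M$-neighbour $w$ and a marked $E_C$-neighbour $a$ — contradicting the termination condition of the \texttt{while} loop.

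The step I expect to be the main obstacle is the "only if" direction: the naive idea of tracking "the first robot that leaves $M^*$" and marking the vertex it reaches breaks down, because several robots may leave $M^*$ in the same time step, so the $E_C$-neighbour that keeps such a vertex connected need not itself be marked. The remedy above is to not commit to any particular robot but to exploit a crossing $E_C$-edge between the marked and the unmarked occupied vertices (which exists purely from connectivity) and, separately, to note that its unmarked endpoint still inherits an $E_M$-edge to a marked vertex from the previous time step. A minor but necessary bit of bookkeeping on the "if" side is that a spare robot is always available at $b$ during the construction — guaranteed by $m\le n-1$ — and the observation that placing an extra robot on an already-occupied vertex leaves the induced connectivity subgraph unchanged.
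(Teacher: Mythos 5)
Your proof is correct, and while the ``if'' direction follows essentially the same plan as the paper's (induct on the order in which Algorithm~\ref{alg:traverse} marks vertices, and realize each newly marked vertex by dispatching a fresh robot from $b$ --- your variant walks a single robot along the reversed $parent$-chain through already-occupied vertices, whereas the paper sends a pre-counted group down each branch; both are sound), your ``only if'' direction takes a genuinely different route. The paper's argument rests on a serialization lemma: any simultaneous move of $k$ robots in a traversal $P$ can be replaced by at most $k+1$ single-robot steps in an $(n-1)$-robot solution $P'$, ordered by $E_C$-adjacency outward from $b$, after which one argues that the marked set always contains the occupied set. You instead avoid serialization entirely: you take the first time $t^*$ at which some occupied vertex lies outside the final marked set $M^*$, split the vertex set of the connected graph $G_C^{t^*}$ into $A\subseteq M^*$ (containing $b$) and $B\cap M^*=\emptyset$, extract a crossing $E_C$-edge, and observe that its endpoint in $B$ also inherits a marked $E_M$-neighbour from time $t^*-1$ by minimality of $t^*$ --- contradicting the termination condition of the loop. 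This cut-based argument is arguably tighter than the paper's: it sidesteps the issue you correctly flag (a vertex entered at time $t^*$ may owe its connectivity only to other vertices also entered at $t^*$), which the paper's informal ordering of moved robots by $E_C$-adjacency handles only implicitly. The paper's serialization, on the other hand, yields as a by-product an explicit $(n-1)$-robot solution $P'$ shadowing any $r$-robot solution $P$, which is of some independent interest; your proof establishes the proposition without constructing such a shadow.
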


\begin{proof}
First we show that if $v$ is marked, then $v$ can be visited. This can be shown by induction with the assumption that every vertex $v$ that is added is occupied by a robot $i$ that started at the base station (i.e. $p_0(i)=b$) and has connectivity to the base station (there is a path in $G_C^t$ from $b$ to $v$). For the first vertex $v$ that is added there must be edges $[b,v] \in E_M$ and $[b,v] \in E_C$, i.e. robots can move from $b$ to $v$. For subsequently marked vertices $v$ there is an edge $[v,w] \in E_M$ such that a subset of the robots that are at $w$ can move to $v$. The connectivity is maintained by a robot at vertex $u$ and the edge $[v,u] \in E_C$. The traversal can be interpreted as a solution with $n-1$ robots: If a vertex $v$ is marked, a number of robots (sufficient to visit all vertices that are marked in subsequent steps) move from $w$ to $v$ such that there is exactly one robot at each $v$ that has been marked when the algorithm terminates. The number of robots that should move to a marked vertex can be determined by traversing the tree described by the $parent$ function, which is not shown here.

To show that if a solution $P$ with $r$ robots traverses a graph, the algorithm terminates with $traverse=true$, a solution $P'$ with $n-1$ robots can be constructed with help of the following observation. From two subsequent positions $p_t$ and $p_{t+1}$ from $P$, where $k$ robots change their position, a sequence of at most $k+1$ positions $p'_{t'}, \ldots, p'_{t'+l}$ with $l \leq k$ and $\{p_t(1) \cup \ldots \cup p_t(r)\} \subseteq \{p'_{t'}(1) \cup \ldots \cup p'_{t'}(n)\}$ such that $\{p_{t+1}(1) \cup \ldots \cup p_{t+1}(r)\} \subseteq \{p'_{t'+l}(1) \cup \ldots \cup p'_{t'+l}(n)\}$ can be constructed. Figure~\ref{fig:example_traverse_1} shows two steps of a solution $P$ with $r=3$ where the robots move at the same time from the lower to the upper vertices. In Figure~\ref{fig:example_traverse_2} four steps of a solution $P'$ with $r=n-1=6$ are shown, which visits the same set of vertices as $P$. First, for a robot $i_1$ in $P$ for which $[p_{t+1}(i_1), b] \in E_C$, a robot $j_1$ in $P'$ can move from $p'_{t'}(j_1)=p_{t}(i_1)$ to $p'_{t'+1}(j_1)=p_{t+1}(i_1)$. In particular, $p_{t+1}(i_1)$ is marked, and these are the robots that move to the upper left vertex in Figures~\ref{fig:example_traverse} (a) and (b). Then, for a robot $i_2$ with $[p_{t+1}(i_1), p_{t+1}(i_2)] \in E_C$ a robot $j_2$ can move in $P'$ such that $p'_{t'+2}(j_2)=p_{t+1}(i_2)$. These are the robots that move to the upper middle vertex in Figure~\ref{fig:example_traverse}. This procedure can be continued to $p'_{t'+l}(j_l)=p_{t+1}(i_l)$.

We have to show that Algorithm~\ref{alg:traverse} marks all vertices that can be visited by the solution $P$. Suppose that $P$ traverses the graph but Algorithm~\ref{alg:traverse} terminates with $traverse=false$. Then there must be two steps of $P$, $p_t$ and $p_{t+1}$, such that $\{p_t(1) \cup \ldots \cup p_t(r)\} \subseteq \{v \in V: marked(v)\}$ (this is certainly true for $t=0$) and $\{p_{t+1}(1) \cup \ldots \cup p_{t+1}(r)\} \nsubseteq \{v \in V: marked(v)\}$. However, this contradicts the observation above.
\end{proof}

\section{Persistent surveillance on convex grid areas}
\label{sec:convex}

We consider a convex mission area without obstacles, which is divided into a two-dimensional grid of square cells with unit side length. A subset of these cells are sensing locations and the base station is located at a particular cell. A robot can move from a cell to one of the eight neighboring cells (except for cells at the boundary of the area) within one time step. The cells correspond to vertices of $G_M$ and $G_C$, and $E_M$ contains the edges between neighboring cells. The edges in $E_C$ are defined by the communication range $R^{com}$, which is measured in cells. For example, $R^{com}=3$ means that two cells are in communication range if the Euclidean distance between the center of the cells is smaller than or equal to $3$. For the sake of completeness we recap the algorithms for short horizon (SH) and short horizon cooperative (SHC) movement planning \cite{Scherer2017}.

\subsection{Short horizon (SH) movement planning}

The policy in \cite{Nigam2012} assigns a sensing location for each robot $i$ at each time step $t$ based on a weighted combination of the instantaneous idleness $I_t(v)$ of sensing location $v$, the distance between the robot and the sensing location $dist_{G_M}(p_t(i), v)$, and the minimum distance between $v$ and any other robot $j \neq i$:
\begin{align}
\label{eq:assign}
	A(v, i) &= I_t(v) \nonumber \\
	&+ \omega_0 \: dist_{G_M}(p_t(i), v) \nonumber \\
	&+ \omega_1 \min_{j \neq i}{\{dist_{G_M}(p_{t}(j), v)\}}.
\end{align}
Each robot gets assigned to the sensing location with the highest value $A(i, v)$ individually. The weighting parameters $\omega_0$ and $\omega_1$ are determined by an offline optimization algorithm where the parameter space is sampled and the mission is simulated to get the objective value for a particular set of parameters. We adopt this approach to enforce the connectivity constraints by disallowing moves that would result in a disconnected network and denote it as \emph{short horizon} (SH) movement planner (Algorithm~\ref{alg:sh}). The algorithm calculates positions for the robots starting at the base station over a finite horizon of $T$ time steps. After calculating the assignment matrix according to Equation~\ref{eq:assign} (line~\ref{line:sh_amatrix}), the goal for each robot is determined (line~\ref{line:sh_nextgoal}). The position of a robot $i$ at time $t+1$ is the neighboring position of the position at time $t$ that is closest to the goal and does not disconnect the network of the robots (line~\ref{line:sh_nextpos}). $N(p_t(i))$ denotes the set of neighbor positions of the current position $p_t(i)$ of robot $i$ including the current position. The goals and new positions for time $t$ are calculated for each robot consecutively in arbitrary order.

\begin{algorithm}[t]
	\caption{Short horizon (SH)}
	\label{alg:sh}
	\small
	\begin{algorithmic}[1] % The number tells where the line numbering should start
		\Require
			\Statex \begin{flushleft} $T$, $R$, $V_S$, $G_M$, $G_C$, $b$, $\omega_0, \omega_1$ \end{flushleft}%to avoid spacing to fill line
		\Ensure
			\Statex \begin{flushleft} $P=(p_0, \ldots, p_T)$ \end{flushleft}
		\State $p_0 \gets (b, \ldots, b)$
		\For{$t \gets 0, \ldots, T-1$}
			\State $p' \gets p_{t}$
			\State $A \gets \Call{CalcAssignmentMatrix}{}$	\label{line:sh_amatrix}
			\For{$i \in R$}
				\State $goal(i) \gets \argmax_{v \in V_S}{\{A(v, i)\}}$	\label{line:sh_nextgoal}
				\State $p'(i) \gets \argmin_{v \in N(p_t(i))}{\{dist_{G_M}(v, goal(i)):}$		\label{line:sh_nextpos}
				\StatexIndent[2] $G_C\langle\{p'(j): j \in R\setminus \{i\}\} \cup \{v\} \cup \{b\}\rangle \text{ is connected} \}$
			\EndFor
			\State $p_{t+1} \gets p'$
		\EndFor
		
		\Statex
		\Procedure{CalcAssignmentMatrix}{}
			\For{$v \in V_S, i \in R$}
				\State $A(v, i) \gets I_{t}(v)$
				\StatexIndent[3] $ + \omega_0 \: dist_{G_M}(p_{t}(i), v)$
				\StatexIndent[3] $ + \omega_1 \min_{j \neq i}{\{dist_{G_M}(p_{t}(j), v)\}}$
			\EndFor
			\State \Return $A$
		\EndProcedure
	\end{algorithmic}
\end{algorithm}

\subsection{Short horizon cooperative (SHC) movement planning}

Because the goals are approached independently by the robots in the SH algorithm, it can happen that robots block each other infinitely due to the connectivity constraint. To overcome this mutual blocking problem we developed an extension to SH based on graph matching and formation reconfiguration. SH assigns a goal to every robot, which then approach their goal individually. In contrast to that an iteration of SHC (Algorithm~\ref{alg:shc}) consists of three phases: (i) goal selection, (ii) goal assignment and (iii) reconfiguration.

\subsubsection{Goal selection}
The goal selection phase starts with calculating an assignment matrix according to Equation~\ref{eq:assign} (line~\ref{line:shc_amatrix}). After that the sensing locations with the highest values together with the base station are selected as terminal vertices for a Steiner tree in $G_C$. A Steiner tree is a vertex induced subgraph $G_C\langle \{b, v_1, \ldots v_i\} \cup N\rangle$ containing the terminal vertices $\{b, v_1, \ldots v_i\}$ and possibly a set of non-terminal vertices $N$ such that the tree is connected. The function $calc\_steiner\_tree$ calculates an approximate solution $T_f$ for the NP-hard Steiner tree problem with minimum number of non-terminal vertices \cite{Klein1995}. The algorithms tries to include as many sensing locations into a Steiner tree that can be built with the available number of robots (lines~\ref{line:shc_goalbegin} to \ref{line:shc_goalend}).

\subsubsection{Goal assignment}
The goal assignment phase performs a matching between the desired final tree $T_f$ and the actual vertex induced subgraph $G_a=G_C\langle\{p_{t}(i): i \in R\} \cup \{b\}\rangle$ containing the positions of the robots at time $t$ and the base station with the aim to reduce the time to reach the final configuration. A prefix labeling is applied to the vertices of the final tree $T_f$ \cite{Navaravong2012}, which results in the labels $L_f$ (line~\ref{line:shc_labelfinal}). Then a tree $T_a$ is selected from $G_a$ \cite{Kan2015}, and a matching between $T_a$ and $T_f$ is calculated \cite{Umeyama1988} (line~\ref{line:shc_graphmatching}). The labels for the vertices that could not be matched are determined (line~\ref{line:shc_completelabels}). The extra and missing vertices \cite{Navaravong2012} are computed in line~\ref{line:shc_calcextramissing}. Extra vertices represent robots and are vertices with labels from $L_a$ that are not in $L_f$ ($\mathcal{V}_e$). Missing vertices represent vertices from $V$ (final robot positions) with labels in $L_f$ that are not in $L_a$ ($\mathcal{V}_m$). Finally, a matching between extra and missing vertices is computed based on the distance in $T_a$ (line~\ref{line:shc_extramissing} to line~\ref{line:shc_emmatching}). Figure~\ref{fig:example_graph_matching} depicts an example for the prefix labeling and graph matching algorithm.
%Extra vertices (which represent robots) with labels from $L_a$ that are not in $L_f$ ($\mathcal{V}_e$), and missing vertices (which represent vertices from $V$/final robot positions) with labels in $L_f$ that are not in $L_a$ ($\mathcal{V}_m$) are determined \cite{Navaravong2012} (line~\ref{line:shc_calcextramissing}). 

\begin{figure}[t]
	\centering
	\begin{tabular}{cc}
		\subfloat[]{
			\includegraphics[scale=0.45]{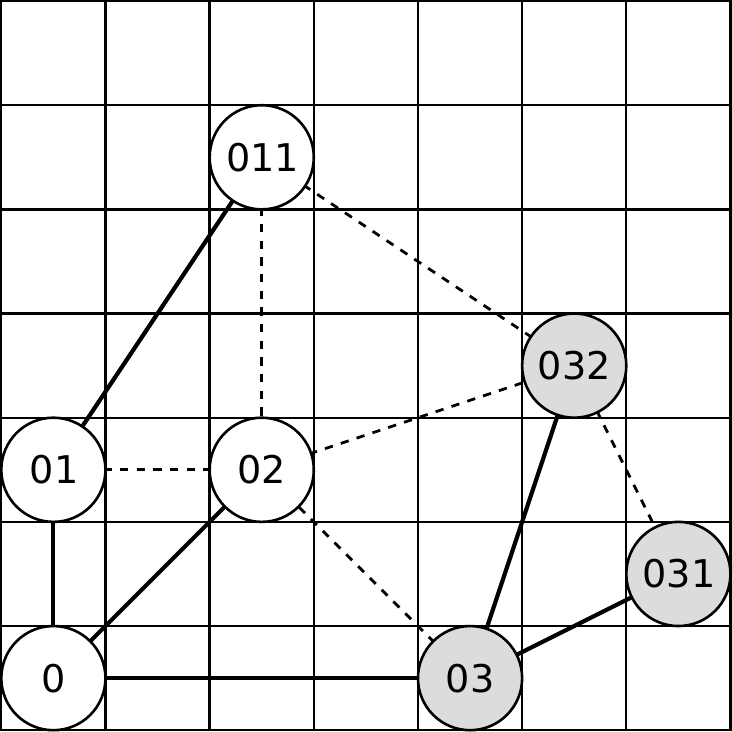}
			\label{fig:example_graph_matching_1}
		}
		&
		\subfloat[]{
			\includegraphics[scale=0.45]{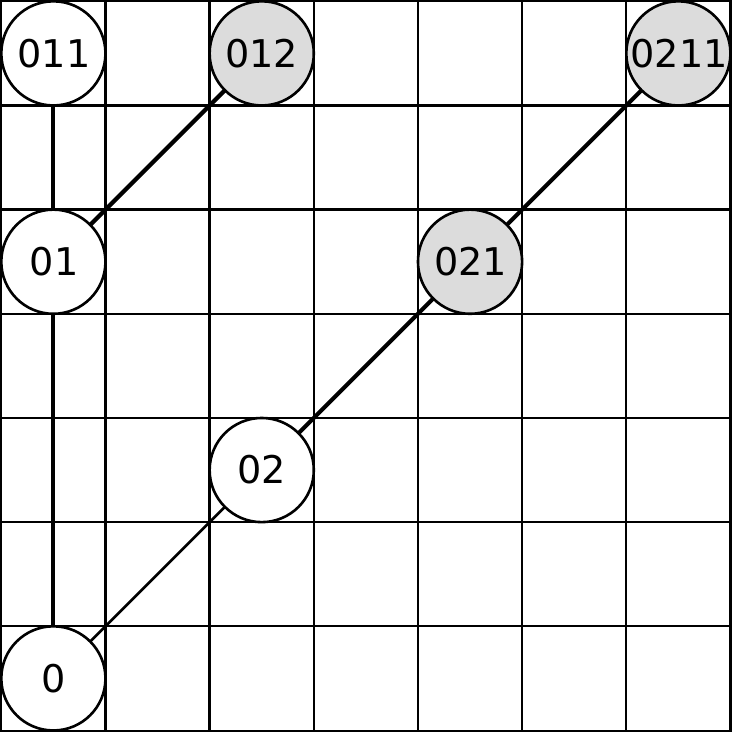}
			\label{fig:example_graph_matching_2}
		}
	\end{tabular}
	\caption{Outcome of the prefix labeling and graph matching algorithm for a grid area and 6 robots. The actual graph (a) models the state of the robots and the base station (node `0') at a particular time step with a line between two vertices if they are within communication range. A dashed line indicates the communication link and a solid line indicates the selected tree $T_a$ in the actual graph $G_a$. The graph (b) models the final desired robot configuration $T_f$, where each robot is assigned a relay position or a sensing location. Assume for example, that robots labeled with `011', `012', and `0211' are assigned to sensing locations (terminal vertices in the Steiner tree) but can reach them only with help of relays labeled with `01', `02', and `021' (nonterminal vetices). The shaded vertices are the extra vertices in (a) and the missing vertices in (b), respectively.}
\label{fig:example_graph_matching}
\end{figure}

\subsubsection{Reconfiguration}

In this phase every extra robot $i \in \mathcal{V}_e$ makes a move in $G_M$ towards the position $M(i)$ according the matching $M$ (line~\ref{line:shc_nextpos}). Similar to SH, line~\ref{line:sh_nextpos}, the next position is chosen such that the network does not get disconnected. When a robots reaches the goal position, the corresponding label in $L_a$ is updated to meet the label in $L_f$, and the robot is removed from the set of extra vertices (line~\ref{line:shc_reachedbegin} to \ref{line:shc_reachedend}). If the tree $T_f$ with the labels $L_f$ is a subgraph of the actual connectivity graph $G_a$ with the labels $L_a$, robot $i$ with label $L_a(i)$ gets assigned to the sensing location $v$ with the same label $L_f(v)$. Then the robot approaches its goal similar as in line~\ref{line:sh_nextpos} of SH, (line~\ref{line:shc_nextposbegin} to \ref{line:shc_nextposend}). Finally, if all sensing locations $v_1, \ldots, v_{\kappa}$ have been reached, the algorithm continues with the next goal selection phase (line~\ref{line:shc_nextphase}).

\begin{algorithm}[t]
	\caption{Short horizon cooperative (SHC)}
	\label{alg:shc}
	\small
	\begin{algorithmic}[1] % The number tells where the line numbering should start
		\Require
			\Statex \begin{flushleft} $T$, $R$, $V_S$, $G_M$, $G_C$, $b$, $\kappa$\end{flushleft}%to avoid spacing to fill line
		\Ensure
			\Statex \begin{flushleft} $P=(p_0, \ldots, p_T)$ \end{flushleft}
		\State $p_0 \gets (b, \ldots, b)$, $t \gets 0$
		\While{$t \leq T-1$}
			\State $A \gets \Call{CalcAssignmentMatrix}{}$	\label{line:shc_amatrix}
			\For{$i \in R$}
				\State $goal(i) \gets \argmax_{v \in V_S}{\{A(v, i)\}}$
				\State $value(i) \gets \max_{v \in V_S}{\{A(v, i)\}}$
			\EndFor
			\State $(v_1, \ldots, v_r) \gets$ sort $goal(\cdot)$ according to $value(\cdot)$ (descending)	\label{line:shc_goalbegin}
			\For{$i \gets 1, \ldots, r$}
				\State $T_f' \gets calc\_steiner\_tree(G_C, \{b, v_1, \ldots v_i\})$ (see \cite{Klein1995})
				\If{$|V(T_f')| > (r+1)$} break
				\EndIf
				\State $T_f \gets T_f'$	\label{line:shc_goalend}
			\EndFor
			\State $L_f \gets label\_trie(T_f)$ (see \cite{Navaravong2012})	\label{line:shc_labelfinal}
			\State $G_a \gets G_C\langle\{p_{t}(i): i \in R\} \cup \{b\}\rangle$
			\State $(T_a, L_a) \gets calc\_graph\_matching(G_a, T_f, L_f)$ (see \cite{Kan2015}, \cite{Umeyama1988}) \label{line:shc_graphmatching}
			\State $L_a \gets complete\_labels\_trie(T_a, L_a)$ (see \cite{Navaravong2012})	\label{line:shc_completelabels}
			\State $(\mathcal{V}_e, \mathcal{V}_m) \gets calc\_extra\_missing(L_a, L_f)$ (see \cite{Navaravong2012})\label{line:shc_calcextramissing}
			\For{$i \in \mathcal{V}_e$, $j \in \mathcal{V}_m$}	\label{line:shc_extramissing}
				%\State $D(i, j) \gets$ length of shortest path in $T_t$ from node $i$ 
				%\StatexIndent[3] to the closest parent of $j$ that is also in $T_t$
				\State $D(i, j) \gets dist_{T_a}(i, \text{closest parent of } j \text{ also in }T_a)$
			\EndFor
			\State $M \gets calc\_matching(D)$	\label{line:shc_emmatching}
			%\For{$i \in \mathcal{V}_e$} $sp(i) \gets $ shortest path in $T_t$ from $i$ to $M(i)$
			\For{$i \in \mathcal{V}_e$} $sp(i) \gets shortest\_path_{T_a}(i, M(i))$
			\EndFor
			\While{true}
				\State $p' \gets p_{t}$
				\For{$i \in R$}
					\If{$i \in \mathcal{V}_e$}
						\State $p'(i) \gets $ first step in $G_M$ along $sp(i)$	\label{line:shc_nextpos}
						\If{$p'(i) = M(i)$}	\label{line:shc_reachedbegin}
							\State $L_a \gets update\_labels(L_a)$
							\State $\mathcal{V}_e \gets \mathcal{V}_e \setminus \{i\}$		\label{line:shc_reachedend}
						\EndIf
					\EndIf
					\State $G_a \gets G_C\langle\{p'(i): i \in R\} \cup \{b\}\rangle$	\label{line:shc_nextposbegin}
					\If{$compare\_graphs(G_a, L_a, T_f, L_f)$}
						%\State $p'(i) \gets $ move towards SL which has same label as $i$
						\State $w \gets v \in V_S$ with $L_f(v) = L_a(i)$
						\State $p'(i) \gets $ first step on 
						\StatexIndent[5] $shortest\_path_{G_M}(p'(i), w)$	\label{line:shc_nextposend}
					\EndIf
				\EndFor
				\State $p_{t+1} \gets p'$, $t \gets t+1$
				\If{all $v_1$ to $v_{\kappa}$ have been reached} break	\label{line:shc_nextphase}
				\EndIf
			\EndWhile
		\EndWhile
	\end{algorithmic}
\end{algorithm}

\subsection{Full horizon (FH) movement planning}

\begin{figure}[t]
	\centering
	\begin{tabular}{cc}
		\subfloat[]{
			\includegraphics[scale=0.45]{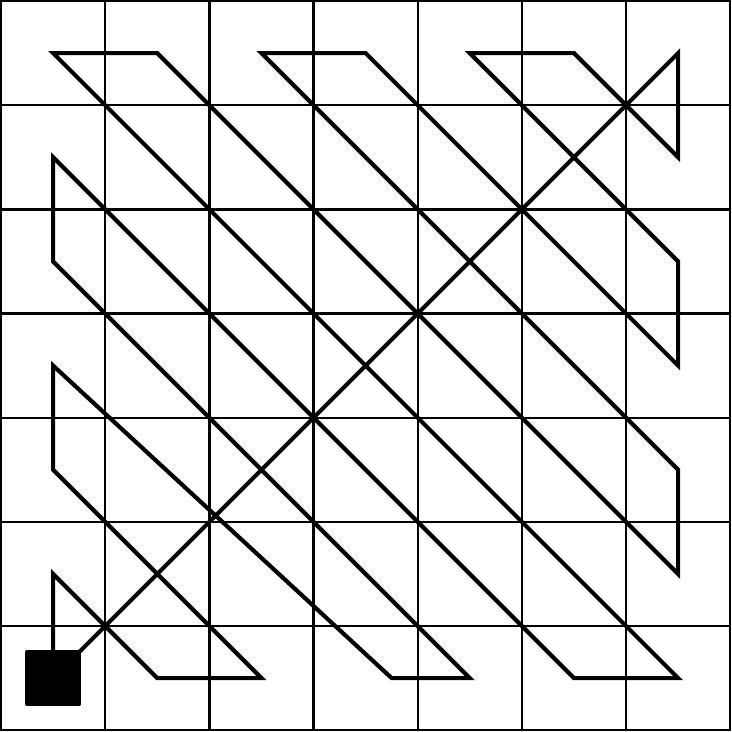}
			\label{fig:convexpath_bh}
		}
		&
		\subfloat[]{
			\includegraphics[scale=0.45]{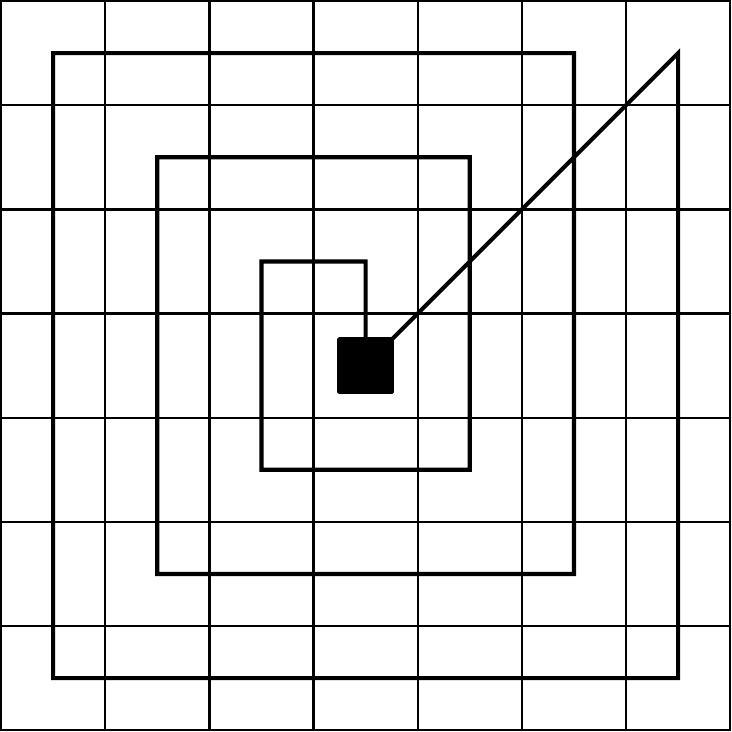}
			\label{fig:convexpath_sp}
		}
	\end{tabular}
	\caption{Two possibles coverage tours for a convex area when the base station is at a corner (a), or when the base station is near the center of the area (b). The tours start at the base station, pass through the upper right sensing location before continuing to cover the area towards the base station.}
\label{fig:convexpath}
\end{figure}

The full horizon approach requires a tour through all sensing locations. A leader robot traverses this path while the other robots maintain a chain in $G_C$ to the base station. When a relay robot is at a sensing location, it is considered visited and can be skipped from the leading robot's tour by taking a shortcut to the next unvisited sensing location on the tour. If a considerable proportion of the area consists of sensing locations, it is beneficial that the tour visits sensing location first that are farther away from the base station and that the relaying robots maintain an equal distance between each other on the chain. Two possible tours are depicted in Figure~\ref{fig:convexpath}. For persistent surveillance the tour traversal is repeated.
%At each time step this chain is the shortest path in $G_C$ from the current position of the leader to the base station.

\section{Extension for partitioned areas}
\label{sec:partition}

The strategies presented in Section~\ref{sec:convex} work best when the area is convex and free of obstacles. In arbitrary shaped environments the concurrent tree traversal (TT) approach from \cite{Mosteo2009} can be used. In this approach a Steiner tree is generated that contains all sensing locations. The robots start at the root (base station) and traverse the edges placing relays when necessary. Branches are traversed concurrently if possible, i.e. robots split into groups at branching points. After the leaves of a branch have been visited, the robots retreat to the branching point and wait for other groups traversing other branches to recombine the groups if a split happened. It is shown in \cite{Mosteo2009}, that the problem of determining how to split to explore the tree as fast as possible is NP-hard, and different heuristic strategies for determining when (\textit{splitting strategy}) and how (\textit{selection strategy}) to split are discussed. The \textit{early split} strategy splits the group at a branching point as soon as all leaves in subsequent branches, that have not been traversed yet, can be reached with the available number of robots. The \textit{late split} strategy splits the robots when all leaves in the subsequent branches, that have not been traversed yet, can be visited concurrently. When no split happens, the next branch has to be selected at a splitting point. The \textit{far selection} strategy selects the branch with the farthest leaf, and the \textit{near selection} strategy selects the branch with the nearest leaf. In total there are four different combinations of strategies to traverse a tree.

The direct application of the tree traversal approach requires a tree that contains all sensing location. To combine the advantages of the tree traversal with convex area coverage we consider a partitioning of the area into convex partitions where the areas are the vertices of the tree, and we assume that a partitioning of the area is given (e.g. determined by some algorithm described in \cite{LaValle2006}, \cite{ORourke1987}). For each partition a cell is defined which is the release point for the robots covering that partition. The release points are connected with edges that form a tree (to be determined after partitioning), and two numbers are associated with each edge: how many relays are necessary to build a chain between the release points (which is the number of cells on the shortest path in $G_C$ and an optimal solution of CMRD on a grid), and how long it takes to travel between the release points (which is the number of cells on the shortest path in $G_M$).

The problem in \cite{Mosteo2009} is different from the problem discussed here in two aspects. First, we consider partitions and not only single vertices. Each release point is considered as own branch to which a number of robots has to be assigned for covering the partition. To cover a partition, a certain minimum number of robots is necessary due to the limited communication range. Second, since we consider persistent surveillance, the groups do not have to retreat to the base station and recombine there if all partitions up to a certain depth in the tree are already covered concurrently. %below a point where all partitions are already covered concurrently.

Our strategy works as follows. The robots start at the base station and move along an edge to a release point, dropping relays when necessary. When the robots reach a release point, one relay stays at the release point and a the remaining robots are split into subgroups according to the same rules described above. The robots assigned to a partition start to cover the area according to one of the algorithms described in Section~\ref{sec:convex} and return to the release point after all sensing locations have been visited. Here, they cover the area again, wait and recombine with other groups to move to the next release point, or retreat to a lower release point in the tree. We formally define the problem as:

\begin{definition}[d-CMPSTT]
The problem \mbox{d-CMPSTT} is defined by a set of tuples of the form $(A, B, \Gamma, \Delta, b, r, T)$. We denote $V_P$ the set of partitions, $A:V_P \rightarrow \mathbb{N}_{\geq 0}$ is the minimum number of robots necessary to cover a partition, $B:(V_P \times V_P) \rightarrow \mathbb{N}_{\geq 0}$ is the number of relays necessary between two release points, $\Gamma: (V_P \times \mathbb{N}_{>0}) \rightarrow \mathbb{N}_{\geq 0}$ is the time it takes to cover a partition with a certain number of robots, $\Delta: (V_P \times V_P) \rightarrow \mathbb{N}_{\geq 0}$ is the travel time between two release points. The vertex $b\in V_P$ is the partition containing the base station. The tree is given by the functions $B$ and $\Delta$. The number of robots is $r$, and $T$ is a time bound for $WI$.
\end{definition}

A solution of a $CMPSTT$ instance is the information about how the robots should split at each release point, which includes the branches visited concurrently, the order of the branches in the case not all can be visited concurrently, and how many robots are assigned to each branch. An example is shown in Figure~\ref{fig:example_cmpstt}. Assume $A(v_i)=2, A(b)=0, B\equiv0, \Gamma(v_i)=0, \Gamma(b)=0, \Delta\equiv1$. With $r=6$ a possible solution would be a splitting $((v_1,v_4))$ at the base station with the assigned number of robots $((4,2))$. This means, that branch $v_1$ and $v_4$ get visited concurrently and that 4 robots are assigned to branch $v_1$, and 2 robots are assigned to branch $v_4$. Since there are no more branches at the base station, $(v_1,v_4)$ is the only tuple in this splitting. A splitting in $v_1$ could be $((v_1, v_2),(v_1, v_3))$, where branches $v_1$ and $v_2$ are visited concurrently before the robots recombine at $v_1$ and then visit $v_1$ and $v_3$ concurrently. After recombination at $v_1$, this cycle starts again. The number of robots assigned to the branches are $((2,2),(2,2))$. In this solution the robots assigned to $v_1$ and $v_4$ at the base station do not have to retreat and combine at the base station, since the branches $v_1$ and $v_4$ are visited concurrently and there are no sensing locations in the base station partition, which results in $WI=6$ (which is  the idleness for $v_2$ and $v_3$). In this notation of a splitting it is allowed that a branch occurs multiple times in a splitting (e.g. at splitting point $v_1$). To be able to enumerate a splitting explicitly and prevent an infinite sequence of tuples in a splitting, we restrict the occurrences of branches in a splitting to the number of branches at the splitting point.

\begin{figure}[t]
	\centering
	\includegraphics[scale=0.6]{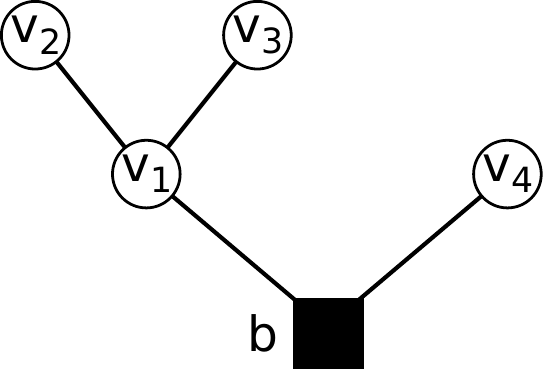}
	\caption{Example of a tree for CMPSTT.}
	\label{fig:example_cmpstt}
\end{figure}

In \cite{Mosteo2009} it is shown by a reduction from multi-processor scheduling that the problem of determining the order in which the branches should be visited is NP-hard. Since this is a special case of CMPSTT (with $A(p)=1, \forall p \in V_P$), CMPSTT it is NP-hard as well. Nevertheless, we can show that CMPSTT with $A(p) > 1$ for some $p$ is also NP-hard. We show this with a reduction from the number partition problem\footnote{The number partition problem is defined as the problem of finding a partition of a set $S=\{s_1, \ldots, s_\alpha\}$, $s_i \in \mathbb{N}_{>0}$, into two subsets $S_1$ and $S_2$ such that the sum of the elements in $S_1$ is equal to the sum of the elements in $S_2$.}. The details of the reduction are shown in the proof of

\begin{proposition}
d-CMPSTT is NP-hard.
\end{proposition}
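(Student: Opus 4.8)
The plan is to reduce the number partition problem to d-CMPSTT. Given a partition instance $S=\{s_1,\ldots,s_\alpha\}$ with total sum $\Sigma=\sum_i s_i$, I would build a tree in which the root $b$ has exactly $\alpha$ child partitions $v_1,\ldots,v_\alpha$ (each a leaf), and encode the element $s_i$ into the cover requirement of $v_i$. Concretely, I would set $B\equiv 0$ and $\Delta\equiv 0$ (or some small fixed value) so that travel and relay costs vanish, set $\Gamma(v_i,m)$ to be some fixed positive covering time $\tau$ independent of $m$ once $m\geq A(v_i)$, and crucially set $A(v_i)=s_i$ (suitably scaled, or $A(v_i)=c\cdot s_i$ for a convenient constant $c$). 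The number of robots would be $r=\Sigma/2$ (assuming $\Sigma$ even; if odd the partition instance is trivially a "no" instance and can be mapped to a fixed "no" instance of d-CMPSTT). Because all $\alpha$ branches hang directly off the root and there is nothing to cover at the root, the only decisions in a solution are which branches are grouped together to be visited concurrently and how robots are split among them; with $r=\Sigma/2$ robots, a concurrent group of branches can be served simultaneously only if the sum of their $A$-values is at most $\Sigma/2$.

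The key steps, in order, are: (1) describe the tree and the functions $A,B,\Gamma,\Delta$ explicitly as above, and argue the construction is polynomial in the size of the partition instance; (2) show that a "yes" instance of number partition yields a d-CMPSTT solution meeting a carefully chosen time bound $T$ — namely, if $S=S_1\cup S_2$ with equal sums, then the branches in $S_1$ can be covered concurrently in one round of duration $\tau$ (plus the negligible/zero travel), the branches in $S_2$ in a second round, so every partition is visited within worst idleness roughly $2\tau$, and we set $T=2\tau$ (or whatever exact value this yields after bookkeeping the first-visit offsets); (3) conversely, show that any d-CMPSTT solution with $WI\leq T$ forces the branches to be served in at most two concurrent rounds, and since no group can exceed robot budget $\Sigma/2$ and the two groups together must cover all of $S$ whose total is $\Sigma$, both groups must sum to exactly $\Sigma/2$, giving a valid number partition. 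The restriction in the problem statement that a branch may occur in a splitting at most as many times as there are branches at that point is harmless here, since an optimal schedule never needs to revisit a leaf more than once per cycle.

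The main obstacle I anticipate is the bookkeeping in step (3): ruling out "clever" schedules that do better than two rounds by exploiting partial overlaps, staggered starts, or splitting robots so that a branch gets more than $A(v_i)$ robots but finishes no faster (which is why $\Gamma$ should be made independent of the robot count above threshold — this removes any incentive to over-assign and pins the per-round duration to exactly $\tau$). I also need to make sure the worst-idleness accounting around the very first visit is handled cleanly; the cleanest fix is to let the schedule run periodically so that in steady state each leaf is revisited every two rounds, and choose $T$ to be exactly the steady-state value $2\tau$ (setting $\Delta$ and $\Gamma(b,\cdot)$ to zero so no slack creeps in). With $A\equiv$ values $>1$ in general, this shows d-CMPSTT is NP-hard even in the regime not covered by the simple specialization $A\equiv 1$ inherited from \cite{Mosteo2009}, which is precisely the strengthening claimed in the text preceding the proposition.
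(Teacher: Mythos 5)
Your proposal is correct and follows essentially the same route as the paper's proof: a star tree rooted at $b$ with leaves $v_i$, cover requirements $A(v_i)=s_i$, unit (or constant) covering time independent of surplus robots, robot budget $r=\tfrac{1}{2}\sum_i s_i$, and a time bound forcing exactly two concurrent rounds whose robot demands must each equal $r$. The paper's only cosmetic differences are $\Delta\equiv 1$ and $T=6$ instead of your $\Delta\equiv 0$ and $T=2\tau$, and it spells out the summation argument ruling out your anticipated obstacles (a branch visited twice per cycle, or over-assigned robots) that you correctly flag in step (3).
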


\begin{proof}
An instance of a number partition problem can be transformed into an instance of \mbox{d-CMPSTT} (the edges $E_A$ of the tree are enumerated explicitly for convenience)
\begin{itemize}
	\item $V_P=\{b, v_1, \ldots, v_\alpha\}$,
	\item $E_A=\{[b, v_i]: i=1, \ldots, \alpha\}$
	\item $A(b)=0, A(v_i)=s_i, i=1, \ldots, \alpha$
	\item $B(b, v_i)=0, i=1, \ldots, \alpha$
	\item $\Gamma(b, \cdot)=0, \Gamma(v_i, \cdot)=1, i=1, \ldots, \alpha$
	\item $\Delta(b, v_i)=1, i=1, \ldots, \alpha$
	\item $r=1/2 \sum_{i=1, \ldots, \alpha}{s_i}$
	\item $T=6$
\end{itemize}
A solution to the number partition problem gives a solution to \mbox{d-CMPSTT} with $WI=6$: Let the subsets be ordered such that $\sum_{i=1, \ldots, k}{s_i}=r=1/2 \sum_{i=1, \ldots, \alpha}{s_i}$ with $k<\alpha$. Then, the robots visit the partitions $v_1, \ldots v_k$ concurrently starting at the base station, retreat to the base station and visit the partitions $v_{k+1}, \ldots, v_\alpha$. After the robots retreat again, this cycle starts from the beginning. The number of robots assigned to each partition is determined by $A(v_i)=s_i, i=1, \ldots, \alpha$.

Assume that there is a solution to \mbox{d-CMPSTT} that results in $WI=6$, i.e. each partition gets visited once or twice in each cycle since $r<\sum_{i=1, \ldots, \alpha}{A(v_i)}$, and the idleness of each partition is smaller or equal to 6. Assume also that this solution cannot be converted to a solution to the number partition problem. Then the following two situations can occur: (i) $WI=3$ for some partitions, i.e. it gets visited twice in each cycle, or (ii) the number of robots assigned to a branch $v_i$ is larger than $A(v_i)=s_i$.

In the first case $v_1, \ldots v_k$ are the partitions that get visited only in the first half of the cycle, $v_{k+1}, \ldots v_l$ are the partitions that get visited only in the second half of the cycle, and $v_{l+1}, \ldots v_\alpha$ are the partitions that get visited twice in each cycle. It follows that
\begin{align}
\sum_{i=1}^{k}{A(v_i)}+\sum_{i=l+1}^{\alpha}{A(v_i)} &\leq r, \text{ and} \nonumber \\
\sum_{i=k+1}^{l}{A(v_i)}+\sum_{i=l+1}^{\alpha}{A(v_i)} &\leq r, \text{ and therefore} \nonumber \\
\frac{1}{2}\sum_{i=1}^{l}{A(v_i)}+\sum_{i=l+1}^{\alpha}{A(v_i)} &\leq r, \nonumber
\end{align}
which can only be the case if the second sum in the last equation is zero, which is a contradiction to the definition of the number partition problem, or there are no partitions which get visited twice, which is a contradiction to the assumption.

In the second case, a similar argument holds:
\begin{align}
\sum_{i=1}^{k}{A(v_i)}+r_1 &\leq r, \text{ and} \nonumber \\
\sum_{i=k+1}^{l}{A(v_i)}+r_2 &\leq r, \text{ and therefore} \nonumber \\
\frac{1}{2}\sum_{i=1}^{l}{A(v_i)}+\frac{r_1+r_2}{2} &\leq r, \nonumber
\end{align}
where $r_1$ and $r_2$ are the surplus robots assigned to the branches. According to the assumption, more robots are assigned to the branches than necessary, and these robots are the surplus robots. From the last equation follows that $r_1$ and $r_2$ must be zero, which is a contradiction to the assumption. Therefore, a splitting for the d-CMPSTT instance has the form $((v_1, \ldots, v_k), (v_{k+1}, \ldots, v_\alpha))$, which constitutes a solution for the number partition problem.
\end{proof}

\section{Simulation results}
\label{sec:eval}

A first simulation is conducted on a grid of 30$\times$30 cells with the base station at the lower left corner and four sensing locations: $s_1$ at cell coordinates $(30,1)$, $s_2$ at $(30,30)$, $s_3$ at $(1,30)$ and $s_4$ at $(15,15)$. The communication range $R^{com}$ is half of the diagonal diameter of the grid, and the number of robots $r=3$. The results are shown in Figure~\ref{fig:time_simple}. In this simple scenario SHC shows the optimal behavior resulting in the best possible $WI$ of 60 (twice the side length of the area): One robot acts as relay for the other two and visits $s_4$, while another robot is commuting between $s_1$ and $s_2$ and the third robot between $s_2$ and $s_3$. SH and FH visit one sensing location after the other. Because the robots start at the base station, the instantaneous worst idleness $WI_t$ grows until all sensing locations have been visited for the first time before dropping to $WI$. The optimal behavior of SHC does not sustain on larger scenarios as can be seen in subsequent simulations.

We conduct the remaining simulation studies on a grid of 30$\times$30 cells with 900 sensing locations and with the base station at the lower left corner. The parameters $\omega_0$ and $\omega_1$ are determined with the pattern search algorithm provided by the Global Optimization Toolbox from Matlab. This algorithm systematically samples the parameter space and converges to a local minimum \cite{Audet2002}. For each pair $(\omega_0, \omega_1)$ polled by the pattern search algorithm, SH (Algorithm~\ref{alg:sh}) or SHC (Algorithm~\ref{alg:shc}) is executed for a predefined number of time steps $T_O$ (optimization horizon), and the objective for the resulting solution $\pi=(p_0, \ldots, p_{T_O})$ is evaluated. The objective value of a solution $\pi$ is the smallest time step when all sensing locations have been visited at least once or $T_O$ if not all sensing locations have been visited within the optimization horizon. This objective value is denoted as coverage time $CT$ and formally defined as $CT:=\min\{\argmin_{0 \leq \tau \leq T_O}\{S_{\tau}:V_S\subseteq S_{\tau}\} \cup \{T_O\}\}$, with $S_{\tau}:=\bigcup_{0\leq t \leq \tau} \{p_t(i): i\in R\}$. To reduce the optimization time, SH and SHC are aborted in each run in the course of the optimization as soon as all sensing locations have been visited at least once. We set the optimization horizon $T_O$ to 1800 time steps.

Pattern search uses a mesh grid to poll points around a current center in each iteration\cite{MathWorksPolling}. We set the start point of the search to $x_0=(0, 0)$, and pattern search polls the points $x_0, x_0+(1,0)$, $x_0+(0,1)$, $x_0+(-1,0)$, $x_0+(0,-1)$ in the first iteration with an initial mesh of size $1$. The point $x_1$ with the lowest objective value is the center of the new mesh in the second iteration. If $x_i$ is different from $x_{i-1}$, the mesh size is multiplied by $2$, and the points $x_1+2(1, 0), x_1+2(0, 1), x_1+2(-1, 0), x_1+2(0, -1)$ are polled in the second iteration. If none of the polled points has a better objective value than $x_i$, $x_i$ becomes the center of a new mesh, which has the size of the old mesh multiplied by $0.5$. This procedure continues until a user defined stopping criterion is met. We set the stopping criterion to a mesh tolerance of $0.01$. This procedure can be parallelized, and we use 4 cores to evaluate polls around a mesh center in parallel.

\begin{figure}[t]
	\center
	\begin{tikzpicture}
		\begin{axis}[
			height=5cm,
			width=\columnwidth,
			legend style={at={(0, 1.05)}, anchor=north west, legend columns=2, legend cell align=left, font=\footnotesize}, 
			ylabel=$WI_t$,
			xlabel=Time steps,
			xmin=0,
			ymin=0,
			ymax=120,
			axis lines*=left,
			]
		
			\addplot
			[style=solid]
			table[comment chars={C}, x index=0, y index=1]
			{data/simple/hist_sh_l.dat};
			
			\addplot
			[style=densely dotted]
			table[comment chars={C}, x index=0, y index=1]
			{data/simple/hist_shc_l.dat};
			
%			\addplot
%			[style=densely dashed]
%			table[comment chars={C}, x index=0, y index=1]
%			{data/simple/hist_tsp_l.dat};
			
			\legend{SH/FH\\SHC\\}
		
		\end{axis}
	\end{tikzpicture}
	\caption{Comparison of the instantaneous idleness $WI_t$ over time for a simple example with 4 sensing locations and 3 robots.}
	\label{fig:time_simple}
\end{figure}
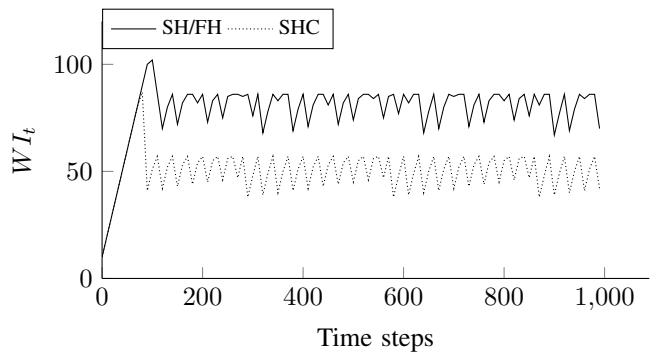

In the following experiment we evaluate the performance of SH, SHC, FH and TT with an increasing number of robots $r$ and a decreasing communication range $R^{com}$. The communication range has been chosen such that all $r$ robots are necessary to reach the upper right sensing location in the area. Figure~\ref{fig:first_urcom_l} shows the coverage time $CT$, which is the objective value of the parameter optimization for SH and SHC as described above. $CT$ can serve as an estimate of $WI$ for comparing the different algorithms. Figure~\ref{fig:first_urcom_l} shows that SH benefits from an increasing number of robots up to $r=10$ but suffers from the low communication range with $r=15$ and is not able to visit all sensing location within the time horizon. For the TT algorithm a tree has been chosen that is the union of all the shortest paths from each sensing location to the base station. All four combinations of strategies described in Section~\ref{sec:partition} are tried and the best value is recorded. This tree is traversed basically in a depth-first order without any concurrency, resulting in a $CT$ that is twice the number of sensing locations because in a tree traversal each cell in the tree is visited at least twice.

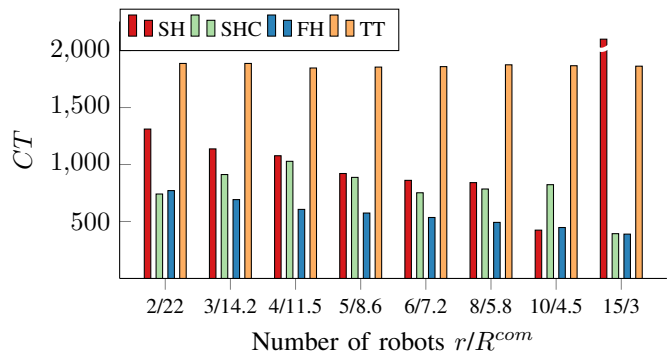
\begin{figure}[t]
\center
\begin{tikzpicture}
	\begin{axis}[
		height=5cm,
		width=\columnwidth,
		ybar,
		enlarge x limits,
		bar width=0.1,
		legend style={at={(0.0, 1.05)}, anchor=north west, legend columns=4, legend cell align=left, font=\footnotesize}, 
		xlabel=Number of robots $r$/$R^{com}$,
		xtick=data,
		xticklabels={2/22, 3/14.2, 4/11.5, 5/8.6, 6/7.2, 8/5.8, 10/4.5, 15/3},	%Rcoms = [22 14.2 11.5 8.6 7.2 5.8  4.5  3];
		xticklabel style={font=\footnotesize},
		ylabel=$CT$, %Coverage time,
		ymin=0,
		ymax=2050,
		ytick={500, 1000, 1500, 2000},
		enlarge y limits=upper,
		restrict y to domain*=0:2100, % Cut values off
		after end axis/.code={ % Draw line indicating break
			\draw [ultra thick, white, decoration={snake, amplitude=1pt}, decorate] (rel axis cs:0.02,0.9) -- (rel axis cs:1,0.9);
			},
		axis lines*=left,
		clip=false,
		]

		\addplot[draw=black, fill=C1]
		table[header=false, x expr=\coordindex+1, y index=4]
			{data/first_urcom/res_sh_l.dat};
		
		\addplot[draw=black, fill=C2]
		table[header=false, x expr=\coordindex+1, y index=4]
			{data/first_urcom/res_shc_l.dat};
		
		\addplot[draw=black, fill=C3]
		table[header=false, x expr=\coordindex+1, y index=4]
			{data/first_urcom/res_tsp_l.dat};
	
		\addplot[draw=black, fill=C4]
		table[header=false, x expr=\coordindex+1, y index=4]
			{data/first_urcom/res_tt_l.dat};
	
		\legend{SH\\SHC\\FH\\TT\\}

	\end{axis}
\end{tikzpicture}
\caption{Comparison of $CT$ with increasing number of robots $r$ and decreasing communication range $R^{com}$. The broken bar for SH indicates that it was not able to visit all sensing locations within the provided time horizon for the optimization.}
\label{fig:first_urcom_l}
\end{figure}

In the next experiment we set the communication range to a constant value of a quarter of the diameter of the area, such that 4 robots are necessary to reach all sensing locations. The results are shown in Figure~\ref{fig:first_u}. It shows that SH benefits most from an increasing number of robots and $CT$ approaches the lower bound $|V_S|/r$. The reason for the drop of TT at $r=8$ is the splitting and selection strategy. There are three branches originating at the base station, one which covers mainly the lower right, one the diagonal, and one the upper right proportion of the area. With 6 and 8 robots all branches are visited subsequently, which results in a lower $CT$ for 8 robots. With 10 robots the diagonal (which contains a small number of sensing locations) and lower right (which contains a larger number of sensing locations) branches are visited concurrently, which results in the situation that the robots visiting the diagonal proportion finish earlier and stay idle while waiting for recombination of the split group.

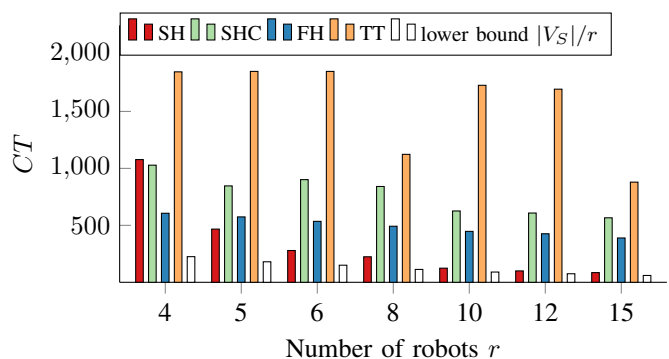
\begin{figure}[t]
\center
\begin{tikzpicture}
	\begin{axis}[
		height=5cm,
		width=\columnwidth,
		ybar,
		enlarge x limits,
		bar width=0.1,
		legend style={at={(0.0, 1.05)}, anchor=north west, legend columns=5, legend cell align=left, font=\footnotesize}, 
		xlabel=Number of robots $r$,
		xtick=data,
		xticklabels={4,5,6,8,10,12,15},
		ylabel=$CT$, %Coverage time,
		ymin=0,
		ymax=2050,
		ytick={500, 1000, 1500, 2000},
		enlarge y limits=upper,
		restrict y to domain*=0:2100, % Cut values off
		after end axis/.code={ % Draw line indicating break
			\draw [ultra thick, white, decoration={snake, amplitude=1pt}, decorate] (rel axis cs:0.02,0.9) -- (rel axis cs:1,0.9);
			},
		axis lines*=left,
		clip=false,
		]

		\addplot[draw=black, fill=C1]
		table[header=false, x expr=\coordindex+1, y index=4]
			{data/first_u/res_sh_l.dat};
		
		\addplot[draw=black, fill=C2]
		table[header=false, x expr=\coordindex+1, y index=4]
			{data/first_u/res_shc_l.dat};
		
		\addplot[draw=black, fill=C3]
		table[header=false, x expr=\coordindex+1, y index=4]
			{data/first_u/res_tsp_l.dat};
	
		\addplot[draw=black, fill=C4]
		table[header=false, x expr=\coordindex+1, y index=4]
			{data/first_u/res_tt_l.dat};
	
		\addplot[draw=black, fill=white]
		table[header=false, x expr=\coordindex+1, y index=4]
			{data/first_u/res_lb_l.dat};

		\legend{SH\\SHC\\FH\\TT\\lower bound $|V_S|/r$\\}

	\end{axis}
\end{tikzpicture}
\caption{Comparison of $CT$ with increasing number of robots $r$ and fixed communication range $R^{com}$ (a quarter of the diameter of the area).}
\label{fig:first_u}
\end{figure}

Figure~\ref{fig:first_urcom_cr} shows $CT$ for an increasing number of robots $r$ and a decreasing communication range $R^{com}$, where a number of robots are used as relays with fixed positions between the base station and a release point near the center of the area. The scenario for $r=5$ ($3$ sensing robots) is shown in Figure~\ref{fig:scenario_first_urcom_cr}. The remaining robots start to cover the area from the release point. The purpose is to assess whether the release point for a partitioning should be at the center or at the corner closest to the base station of a partition. The number of robots in total and the communication range is the same as in Figure~\ref{fig:first_urcom_l}. The numbers on the x-axis in Figure~\ref{fig:first_urcom_cr} are the numbers of robots that are available for coverage of the area. Having a longer chain of relays to shift the release point closer to the center results in a decreased number of robots available for visiting sensing locations and the expected behavior with a worse performance can be observed.

\begin{figure}[t]
	\centering
	\begin{tabular}{ccc}
		\subfloat[]{
			\includegraphics[scale=0.34]{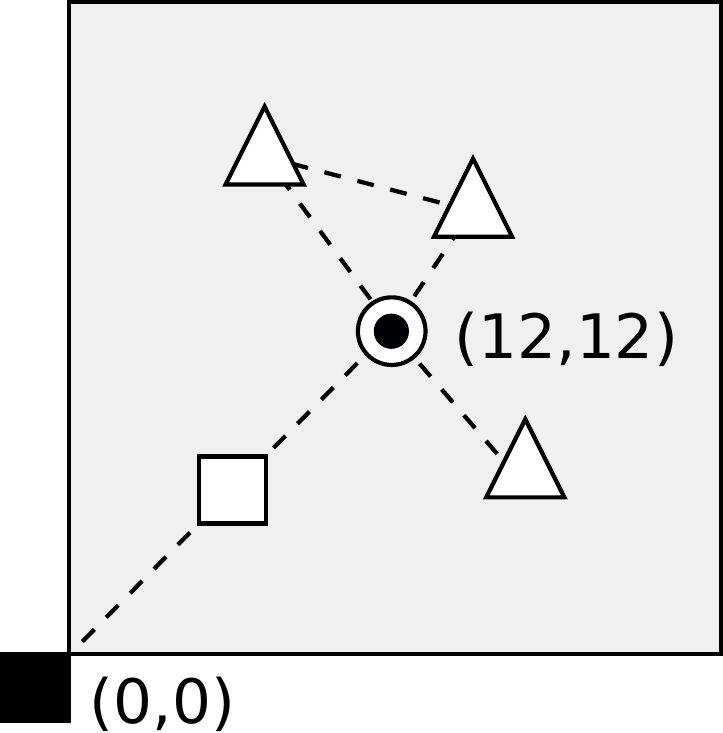}
			\label{fig:scenario_first_urcom_cr}
		}
		&
		\subfloat[]{
			\includegraphics[scale=0.34]{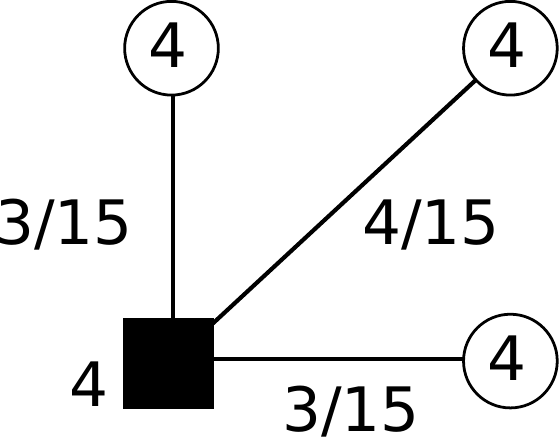}
			\label{fig:scenario_pers_u_att_2}
		}
		&
		\subfloat[]{
			\includegraphics[scale=0.34]{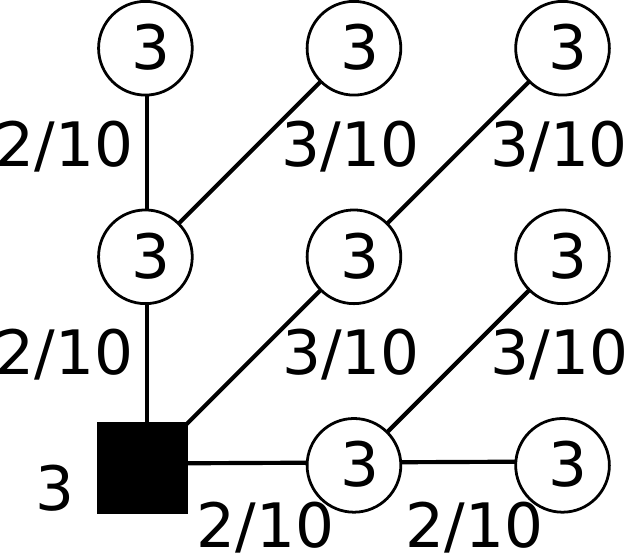}
			\label{fig:scenario_pers_u_att_3}
		}
	\end{tabular}
	\caption{(a) Illustration of the scenario with $5$ robots ($3$ sensing robots) for the results in Figure~\ref{fig:first_urcom_cr} (see Figure~\ref{fig:scenario_cmpstt} for the meaning of the symbols). The numbers show the x and y-coordinates of the cells of the base station and the release point. (b) Tree for CMPSTT for the results in Figure~\ref{fig:pers_u_att_2}, and (c) Figure~\ref{fig:pers_u_att_3}, respectively. The numbers in the vertices and next to the base station (the base station is release point of the lower left partition) define $A$, and the numbers next to the edges define $B/\Delta$ of the CMPSTT instances.}
\label{fig:scenarios_simulation}
\end{figure}

\begin{figure}[t]
\center
\begin{tikzpicture}
	\begin{axis}[
		height=5cm,
		width=\columnwidth,
		ybar,
		enlarge x limits,
		bar width=0.1,
		legend style={at={(0.0, 1.05)}, anchor=north west, legend columns=4, legend cell align=left, font=\footnotesize}, 
		xlabel=Number of robots $r$,
		xtick=data,
		xticklabels={1,2,2,3,3,4,5,8},
		ylabel=$CT$, %Coverage time,
		ymin=0,
		ymax=2050,
		ytick={500, 1000, 1500, 2000},
		enlarge y limits=upper,
		restrict y to domain*=0:2100, % Cut values off
		after end axis/.code={ % Draw line indicating break
			\draw [ultra thick, white, decoration={snake, amplitude=1pt}, decorate] (rel axis cs:0.02,0.8) -- (rel axis cs:1,0.9);
			},
		axis lines*=left,
		clip=false,
		]

		\addplot[draw=black, fill=C1]
		table[header=false, x expr=\coordindex+1, y index=4]
			{data/first_urcom/res_sh_cr.dat};
		
		\addplot[draw=black, fill=C2]
		table[header=false, x expr=\coordindex+1, y index=4]
			{data/first_urcom/res_shc_cr.dat};
		
		\addplot[draw=black, fill=C3]
		table[header=false, x expr=\coordindex+1, y index=4]
			{data/first_urcom/res_tsp_cr.dat};
	
		\addplot[draw=black, fill=C4]
		table[header=false, x expr=\coordindex+1, y index=4]
			{data/first_urcom/res_tt_cr.dat};
	
		\legend{SH\\SHC\\FH\\TT\\}

	\end{axis}
\end{tikzpicture}
\caption{Comparison of $CT$ with increasing number of robots $r$ and decreasing communication range $R^{com}$ with a certain number of relays between the base station and a release point near the center of the area. The number of robots in total and the communication range is the same as in Figure~\ref{fig:first_urcom_l} for each scenario. The numbers on the x-axis are the numbers of robots that are available for coverage of the area.}
\label{fig:first_urcom_cr}
\end{figure}
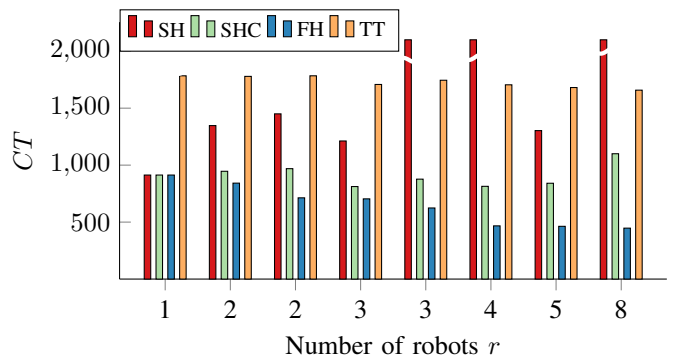

Next, we investigate in $WI$ for area partitioning. We do not consider an non-convex area with obstacles explicitly, but a regular partitioning of the convex area into convex subareas. This allows us to compare the performance on the unpartitioned case with the performance on the partitioned case, and we do not rely on an algorithm for dividing a non-convex polygon into a set of convex polygons. For non-cyclic algorithms $WI$ can only be estimated over a limited time horizon. Figure~\ref{fig:pers_u} shows the results for the unpartitioned area for a time horizon of 3000 time steps. $WI$ is calculated as the maximum time between two consecutive visits within the time window after the first and before the last visit of a sensing location in the horizon and over all sensing locations. For comparison with the tree traversal of partitions (CMPSTT), the area is partitioned into $2\times2$ (Figure~\ref{fig:pers_u_att_2}) and $3\times3$ (Figure~\ref{fig:pers_u_att_3}) rectangular partitions of equal size and traversed with different algorithms (SH, SHC, FH) for convex partitions. The release point for each partition is the lower left corner of the partition (the base station is the release point of the lower left partition). The trees for CMPSTT are shown in Figure~\ref{fig:scenario_pers_u_att_2} and Figure~\ref{fig:scenario_pers_u_att_3}. The parameters for SH and SHC have been determined for the whole area for the case of an unpartitioned area and for a single partition in the case of a partitioning. The optimization times for determining $(\omega_0, \omega_1)$ have been recorded for each scenario and are shown in Figure~\ref{fig:optim_time_u} for SH. For the partitioning, the optimization times for the different number of robots assigned to a partition are summed up, e.g. if the different number of robots assigned to any partition are $r_1, \ldots, r_k$, then the optimization times for $r_1, \ldots, r_k$ robots are summed up (since all these different parameters $(\omega_0, \omega_1)$ for different number of robots are required).

Although the performance is comparable between the unpartitioned and partitioned scenario up to 20 robots for SH, optimizing for smaller areas can greatly reduce the optimization time. As the number of robots gets larger, the number of robots assigned to partitions get smaller because the partitions are covered concurrently, and therefore the optimization time decreases for an increasing total number of robots. The performance of SHC and FH can even be improved by partitioning the area. For these approaches using a smaller number of robots on a smaller area is more effective, because the performance improvement is smaller as compared to SH with an increasing number of robots (cp. Figure~\ref{fig:first_u} and  Figure~\ref{fig:pers_u}).

\begin{figure}[t]
\center
\begin{tikzpicture}
	\begin{axis}[
		height=5cm,
		width=\columnwidth,
		ybar,
		enlarge x limits,
		bar width=0.1,
		legend style={at={(0.0, 1.05)}, anchor=north west, legend columns=4, legend cell align=left, font=\footnotesize}, 
		xlabel=Number of robots $r$,
		xtick=data,
		xticklabels={8,10,12,14,16,18,20,22,24,26},
		ylabel=$WI$,
		ymin=0,
		ymax=1500,
		ytick={500, 1000, 1500, 2000},
		%enlarge y limits=upper,
		%restrict y to domain*=0:1600, % Cut values off
		%after end axis/.code={ % Draw line indicating break
		%	\draw [ultra thick, white, decoration={snake, amplitude=1pt}, decorate] (rel axis cs:0.02,0.8) -- (rel axis cs:1,0.8);
		%	},
		axis lines*=left,
		clip=false,
		]

		\addplot[draw=black, fill=C1, restrict x to domain=1:10]
		table[header=false, x expr=\coordindex+1, y index=2]
			{data/persistent/res_sh_l.dat};
		
		\addplot[draw=black, fill=C2, restrict x to domain=1:10]
		table[header=false, x expr=\coordindex+1, y index=2]
			{data/persistent/res_shc_l.dat};
		
		\addplot[draw=black, fill=C3, restrict x to domain=1:10]
		table[header=false, x expr=\coordindex+1, y index=2]
			{data/persistent/res_tsp_l.dat};
	
		\legend{SH\\SHC\\FH\\}

	\end{axis}
\end{tikzpicture}
\caption{Comparison of $WI$ with an increasing number of robots $r$ and constant communication range $R^{com}=5.8$ over a time horizon of 3000 time steps.}
\label{fig:pers_u}
\end{figure}
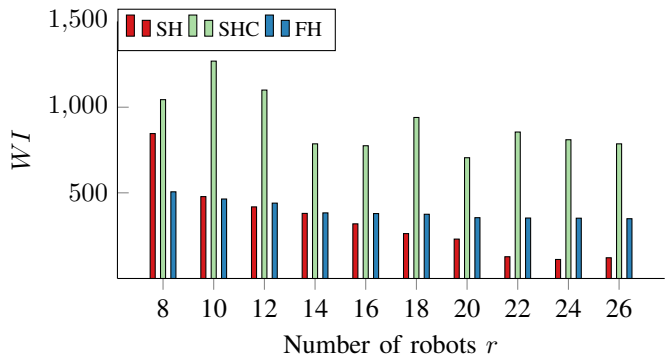

\begin{figure}[t]
	\center
	\begin{tikzpicture}
	\begin{axis}[
	height=5cm,
	width=\columnwidth,
	ybar,
	enlarge x limits,
	bar width=0.1,
	legend style={at={(0.0, 1.05)}, anchor=north west, legend columns=3, legend cell align=left, font=\footnotesize}, 
	xlabel=Number of robots $r$,
	xtick=data,
	xticklabels={8,10,12,14,16,18,20,22,24,26},
	ylabel=$WI$,
	ymin=0,
	ymax=1500,
	ytick={500, 1000, 1500, 2000},
	%enlarge y limits=upper,
	%restrict y to domain*=0:1600, % Cut values off
	%after end axis/.code={ % Draw line indicating break
	%	\draw [ultra thick, white, decoration={snake, amplitude=1pt}, decorate] (rel axis cs:0.02,0.8) -- (rel axis cs:1,0.8);
	%},
	axis lines*=left,
	clip=false,
	]
	
	\addplot[draw=black, fill=C1, restrict x to domain=1:10]
	table[header=false, x expr=\coordindex+1, y index=2]
	{data/persistent/res_sh_att_2_l.dat};
	
	\addplot[draw=black, fill=C2, restrict x to domain=1:10]
	table[header=false, x expr=\coordindex+1, y index=2]
	{data/persistent/res_shc_att_2_l.dat};
	
	\addplot[draw=black, fill=C3, restrict x to domain=1:10]
	table[header=false, x expr=\coordindex+1, y index=2]
	{data/persistent/res_tsp_att_2_l.dat};
	
	\legend{CMPSTT (SH)\\CMPSTT (SHC)\\CMPSTT (FH)\\}
	
	\end{axis}
	\end{tikzpicture}
	\caption{Comparison of $WI$ with an increasing number of robots $r$ and constant communication range $R^{com}=5.8$ for the tree traversal of partitions (CMPSTT) of the area partitioned into $2\times2$ equally sized rectangular partitions and for different algorithms for the convex partitions (SH, SHC, FH).}
	\label{fig:pers_u_att_2}
\end{figure}
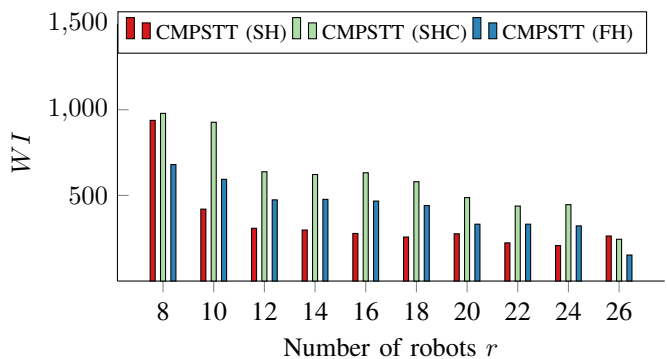

\begin{figure}[t]
	\center
	\begin{tikzpicture}
	\begin{axis}[
	height=5cm,
	width=\columnwidth,
	ybar,
	enlarge x limits,
	bar width=0.1,
	legend style={at={(0.0, 1.05)}, anchor=north west, legend columns=3, legend cell align=left, font=\footnotesize}, 
	xlabel=Number of robots $r$,
	xtick=data,
	xticklabels={8,10,12,14,16,18,20,22,24,26,28},
	ylabel=$WI$,
	ymin=0,
	ymax=1500,
	ytick={500, 1000, 1500, 2000},
	%enlarge y limits=upper,
	%restrict y to domain*=0:1600, % Cut values off
	%after end axis/.code={ % Draw line indicating break
	%	\draw [ultra thick, white, decoration={snake, amplitude=1pt}, decorate] (rel axis cs:0.02,0.8) -- (rel axis cs:1,0.8);
	%},
	axis lines*=left,
	clip=false,
	]
	
	\addplot[draw=black, fill=C1, restrict x to domain=1:10]
	table[header=false, x expr=\coordindex+1, y index=2]
	{data/persistent/res_sh_att_3_l.dat};
	
	\addplot[draw=black, fill=C2, restrict x to domain=1:10]
	table[header=false, x expr=\coordindex+1, y index=2]
	{data/persistent/res_shc_att_3_l.dat};
	
	\addplot[draw=black, fill=C3, restrict x to domain=1:10]
	table[header=false, x expr=\coordindex+1, y index=2]
	{data/persistent/res_tsp_att_3_l.dat};
	
	\legend{CMPSTT (SH)\\CMPSTT (SHC)\\CMPSTT (FH)\\}
	
	\end{axis}
	\end{tikzpicture}
	\caption{Comparison of $WI$ with an increasing number of robots $r$ and constant communication range $R^{com}=5.8$ for the tree traversal of partitions (CMPSTT) of the area partitioned into $3\times3$ equally sized rectangular partitions and for different algorithms for the convex partitions (SH, SHC, FH). In this scenario it is not possible to reach all sensing locations with 8 robots.}
	\label{fig:pers_u_att_3}
\end{figure}
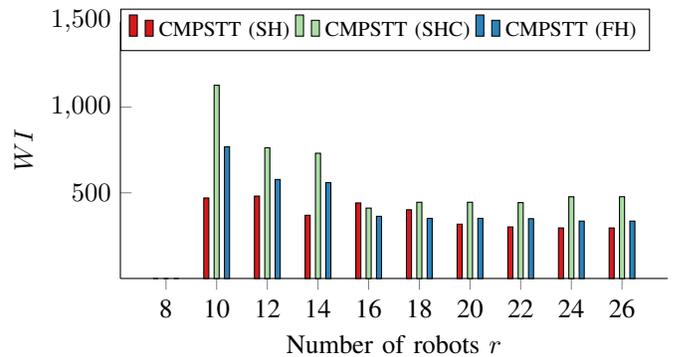

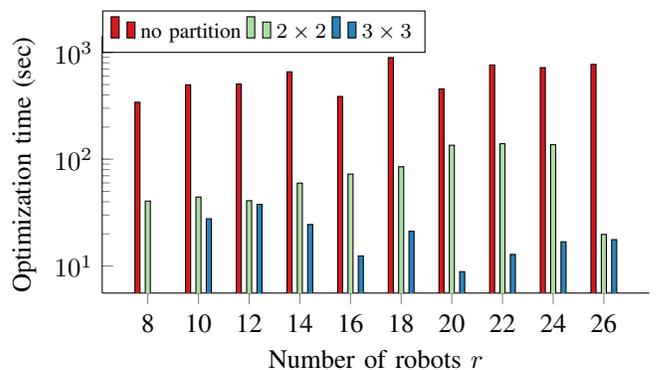
\begin{figure}[t]
\center
\begin{tikzpicture}
	\begin{semilogyaxis}[
		height=5cm,
		width=\columnwidth,
		ybar,
		enlarge x limits,
		bar width=0.1,
		legend style={at={(0.0, 1.1)}, anchor=north west, legend columns=4, legend cell align=left, font=\footnotesize}, 
		xlabel=Number of robots $r$,
		xtick=data,
		xticklabels={8,10,12,14,16,18,20,22,24,26},
		ylabel=Optimization time (sec),
		ytick={1e1, 1e2, 1e3, 1e4},
		%enlarge y limits=upper,
		%restrict y to domain*=0:20000, % Cut values off
		%after end axis/.code={ % Draw line indicating break
		%	\draw [ultra thick, white, decoration={snake, amplitude=1pt}, decorate] (rel axis cs:0.02,0.8) -- (rel axis cs:1,0.8);
		%	},
		axis lines*=left,
		]

		\addplot[draw=black, fill=C1, restrict x to domain=1:10]
		table[header=false, x expr=\coordindex+1, y index=5]
			{data/partparam/res_sh_l.dat};
		
		\addplot[draw=black, fill=C2, restrict x to domain=1:10]
		table[header=false, x expr=\coordindex+1, y index=1]
			{data/partparam/res_sh_att_time_2_l.dat};
		
		\addplot[draw=black, fill=C3, restrict x to domain=1:10]
		table[header=false, x expr=\coordindex+1, y index=1]
			{data/partparam/res_sh_att_time_3_l.dat};
	
		\legend{no partition\\$2\times2$\\$3\times3$\\}

	\end{semilogyaxis}
\end{tikzpicture}
\caption{Optimization time for SH for an increasing number of robots $r$ for the unpartitioned area and for partitions of different size.}
\label{fig:optim_time_u}
\end{figure}

\section{Conclusion}
\label{sec:conclusion}

In this work we have investigated path planning for multiple robots for persistent surveillance with connectivity constraints. We introduce the problem and related problem instances on graphs, which have not been considered yet in literature, and show that they are NP-hard. We propose several strategies on grids that can be applied to convex areas. On one hand, the simple short horizon (SH) strategy, which selects the next goal of a robot greedily without considering the anticipated goals of other robots, achieves the best performance when the number of robots is larger than the minimum number of robots required to reach all sensing locations. In this case the additional robots can be used more effectively than in the short horizon cooperative (SHC) and full horizon (FH) approaches. On the other hand, if all robots are required to reach all sensing locations, SHC and FH perform better. Although FH performs better than SHC in most cases, SHC does not require a preplanned tour through all sensing locations.

The short horizon approaches SH and SHC rely on parameters that have to be optimized before the mission execution. To apply these strategies on more general environments, which arise from discretization of real world scenarios, we propose a combination with a tree traversal approach. The simulation results on the considered scenarios indicate that the combination with tree traversal on a partitioned area does not impair the performance considerably up to a certain number of robots but reduces the optimization time substantially. With a larger number of robots the trade-off between performance and optimization time is revealed for SH. Partitioning can increase the performance for the other approaches, because using a smaller number of robots on smaller areas is more effective.

The presented algorithms can be used to compute paths that have to be followed by the robots, which requires a central entity for computation and synchronization among all robots at each time step. This approach has several limitations, which can be subject of future extensions. If a robot fails, the network gets disconnected and the remaining robots can continue the mission only after failure recovery measures. Furthermore, synchronization among all robots (at least among the robots within a partition) requires a reliable network connection that ensures a timely delivery of the state of each robot through the network to the central entity. Other possible extensions are algorithms for the presented problems that can be applied to general or other special types of graphs (e.g. where ${E_M\subseteq E_C}$).

\bibliographystyle{IEEEtran}
\bibliography{IEEEabrv,references}

% biography section
%\begin{IEEEbiography}{Michael Shell}
%Biography text here.
%\end{IEEEbiography}
%
%% if you will not have a photo at all:
%\begin{IEEEbiographynophoto}{John Doe}
%Biography text here.
%\end{IEEEbiographynophoto}
%
%\begin{IEEEbiographynophoto}{Jane Doe}
%Biography text here.
%\end{IEEEbiographynophoto}

\end{document}